\title{Indirect Query Bayesian Optimization with Integrated Feedback}
\author{
    Mengyan Zhang\textsuperscript{\rm 1},
    Shahine Bouabid\textsuperscript{\rm 2},
    Cheng Soon Ong\textsuperscript{\rm 3},
    Seth Flaxman\textsuperscript{\rm 1},
    Dino Sejdinovic\textsuperscript{\rm 4}
}
\theoremstyle{plain}
\newtheorem{theorem}{Theorem}[section]
\newtheorem{lemma}[theorem]{Lemma}
\theoremstyle{definition}
\newtheorem{definition}[theorem]{Definition}
\newtheorem{assumption}[theorem]{Assumption}
\theoremstyle{remark}
\newtheorem{remark}[theorem]{Remark}
\def\11{{\mathbf 1}}    
\begin{document}

\maketitle


\begin{abstract}
We develop the framework of \textit{Indirect Query Bayesian Optimization (IQBO)}, a new class of Bayesian optimization problems where the integrated feedback is given via a conditional expectation of the unknown function $f$ to be optimized. The underlying conditional distribution can be unknown and learned from data. The goal is to find the global optimum of $f$ by adaptively querying and observing in the space transformed by the conditional distribution. This is motivated by real-world applications where one cannot access direct feedback due to privacy, hardware or computational constraints. We propose the Conditional Max-Value Entropy Search (CMES) acquisition function to address this novel setting, and propose a hierarchical search algorithm with multi-resolution feedback to improve computational efficiency. We show regret bounds for our proposed methods and demonstrate the effectiveness of our approaches on simulated optimization tasks.
\end{abstract}


\section{Introduction}

Bayesian optimization (BO) is a sequential decision-making method for global optimization of black-box functions \citep{garnett_bayesopIQBOok_2023}.
BO has been widely studied in machine learning and applied in real-world applications such as optimising hyperparameters \citep{wu2019hyperparameter} and experimental design in biology \citep{pmlr-v162-stanton22a}.
We formalize a new class of BO problems called \textit{Indirect Query Bayesian Optimization (IQBO)}, where instead of querying directly from the input space $x \in \mathcal{X}$, one has to select from a transformed space $a \in \mathcal{A}$ linked to the input space via a potentially unknown conditional distribution $p(x|a)$. Feedback consists of the noisy conditional expectation of the target function, $\mathbb{E}_{X}[f(X)|A=a] + \epsilon$ with $\epsilon \sim \mathcal{N}(0, \sigma^2)$. The goal of IQBO is to optimize the target function $f$ under such integrated feedback using a given budget of queries. 

The task of IQBO is motivated by real-world applications where only access to a transformed space and corresponding feedback are possible, while one is still interested in optimizing the target function in the original input space. For example, in multi-armed bandits, we may not be able to directly query an arm, but only the average reward given a context with a particular policy; in treatment experimental design, the actual treatment applied to the patient can differ from what the doctor recommends \cite{pmlr-v70-hartford17a}; in epidemiology, when finding the highest Malaria incidence rates, the available observations are only aggregated incidence counts over a larger geographical region \citep{law_variational_2018}.
We further illustrate our motivations in Section
\ref{sec: problem setup and cmp}.

The IQBO problem is challenging, since we can only indirectly query the objective function. Moreover, the feedback is integrated via potentially unknown conditional expectations.
Our approach models the target function $f$ as a Gaussian process. We can then leverage the framework of Conditional Mean Processes (CMP) \citep{chau_deconditional_2021}, developed for the problem of statistical downscaling, to update our beliefs on $f$ using conditional mean observations, giving the corresponding posterior Gaussian process.
However, the key challenge of IQBO remains: \textit{how to use this posterior to design the appropriate query strategy given that the observation and recommendation spaces are inherently mismatched?}  
The classical methods, such as Upper Confidence Bounds (UCB) or Thompson Sampling (TS), cannot be directly applied due to the mismatch, and as a result we require a bespoke acquisition function.

We develop a one-step look-ahead information gain approach, proposing the Conditional Max-value Entropy Search (CMES) policy (Section \ref{sec: CMES}), 
where the agent selects the data point with the largest mutual information between the optimum value of the target function and the corresponding integrated feedback. The CMES policy allows us to query in a transformed space but optimize the target function $f$ directly. 
The IQBO problem contains the multi-resolution conditional expectations as a special case (Section \ref{sec: motivation Multi-fidelity}), where higher resolution queries carry a higher cost.
We propose a tree search algorithm  (Section \ref{sec:hierarchical}) with the multi-resolution feedback, where we weight the CMES policy by the inverse cost and adaptively partition the space via a tree structure.

The final challenge is that none of the theoretical results including regret bounds, apply to the IQBO setting, and it is not clear under which assumptions on the feedback mechanism we can guarantee performance comparable to that of the direct feedback. 
We develop several theoretical results on IQBO in Section \ref{sec: theory}, including the instant regret upper bound for CMES policy in Theorem \ref{theo: Instant Regret Bound CMES} and for budgeted multi-resolution setting in Theorem \ref{theo: instant regret bound gpucb} under certain assumptions of conditional distribution. 
Simulated experiments in Section \ref{sec: Experiments} demonstrate that our proposed policy and algorithms outperform baselines.





Our \textbf{contributions} are as follows:
1) We introduce a novel problem of Indirect Query Bayesian Optimization with integrated feedback, highlighting its motivations and potential applications in various domains.
2) We develop a new acquisition function called Conditional Max-Value Entropy Search (CMES) specifically designed to address the new setting, ensuring efficient exploration of the search space.
3) We present a hierarchical search algorithm based on CMES to enhance computational efficiency and address multi-resolution settings. 
4) We establish regret upper bounds of the proposed CMES policy and budgeted multi-resolution setting, providing theoretical guarantees for its performance.
5) We demonstrate the empirical effectiveness of our proposed algorithm on synthetic optimization functions, comparing it to state-of-the-art baselines to showcase its superior performance.

\section{Problem Setup and Motivations}
\label{sec: problem setup and cmp}

\begin{figure}[t!]
    \centering
    \includegraphics[width=0.9\linewidth]{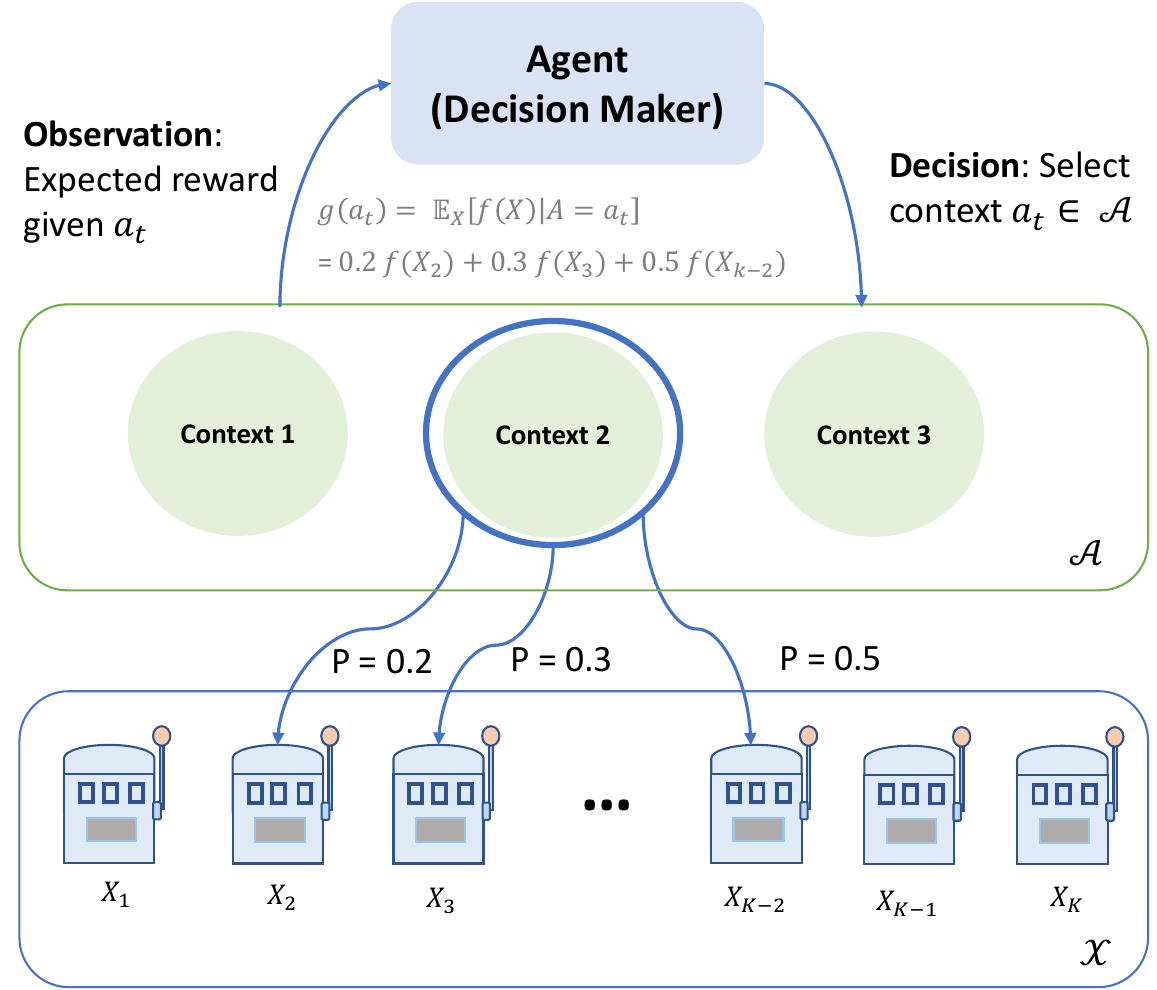}
    \caption{Illustration of indirect query bandits. The agent selects a context $a_t \in \mathcal{A}$ and observes the expectation feedback conditional on the choice of context. 
    }
    \label{fig:indirect query bandits}
\end{figure}

We aim to optimize the unknown function $f: \mathcal{X} \rightarrow \mathbb{R}$, where  $\mathcal{X} \subset \mathbb{R}^d$. 
In the classical BO setting \citep{garnett_bayesopIQBOok_2023}, the agent will at time $t$ query $x_t \in \mathcal{X}$ and receive feedback $f(x_t)$ (potentially with noise).
Instead, in each iteration $t$, we consider the indirect query $a_t \in \mathcal{A}$ according to a policy $\pi$ and observe noisy integrated feedback 
$$z_t = g(a_t) + \epsilon_t,$$
where the function $g$ is defined as the conditional expectation of $f$, 
\begin{align}
\label{equ: conditional expectation}
    g(a_t) := \mathbb{E}_X[f(X)|A = a_t] = \int_{\mathcal{X}} f(x) p(x|a_t) dx,
\end{align}
and $\epsilon_t \sim \mathcal{N}(0, \sigma^2)$.
The conditional distribution $p(x|a)$ can either be known or learned from given offline collected data $\{(x_j, a_j)\}_{j = 1}^N$, where $x_j$ is sampled from $p(X|A = a_j)$ and $a_j$ is sampled from $p(A)$. Note that the sequential query $a_t \in \mathcal{A}$ might not be present in the offline collected data. 
We denote the available data consisting of $T$ queries collated with the offline data as $\mathcal{D}^T = [\{(a_t, z_t)\}_{t=1}^T, \{(x_j, a_j)\}_{j = 1}^N]$. 

The integral feedback via conditional expectation provides a rich structure for the Bayesian optimization problem which can be useful in many applications. We provide two examples below to motivates our setting.

\paragraph{Motivation: Indirect Query Bandits}
We can view $a_t \in \mathcal{A}$ as contexts we can select, and this can determine the distribution of direct factors that we care about. 
We illustrate this in Figure \ref{fig:indirect query bandits} with a discrete space as an example. This can be viewed as a variation of the multi-armed bandit problem where we do not directly interact with the arms to get explicit rewards. 
Instead, we choose a context $a_t \in \mathcal{A}$ assumed to have a particular distribution over $\mathcal{X}$. By selecting a context, we will observe the average reward of that context. 

This setting may arise in online advertising, where only aggregated performance data (click-through or conversion rates) from multiple third-party platforms may exist \citep{avadhanulaStochasticBanditsMultiplatform2021}. 
Another application is in medical trials, e.g. dose finding trials \citep{azizMultiArmedBanditDesigns2021}, where agents may correspond to healthcare providers who assign treatment to patients based on a particular policy or protocol, and we observe the average recovery rates for each healthcare provider (outcomes for individual patients are not available due to privacy restrictions).
Function $g$ represents the aggregated feedback in such two-layer structure systems with a hidden mechanism $p(x|a)$ that chooses queries in $x$-space for a given context/healthcare provider.


\begin{figure}[t!]
    \centering
    \includegraphics[width=1\linewidth]{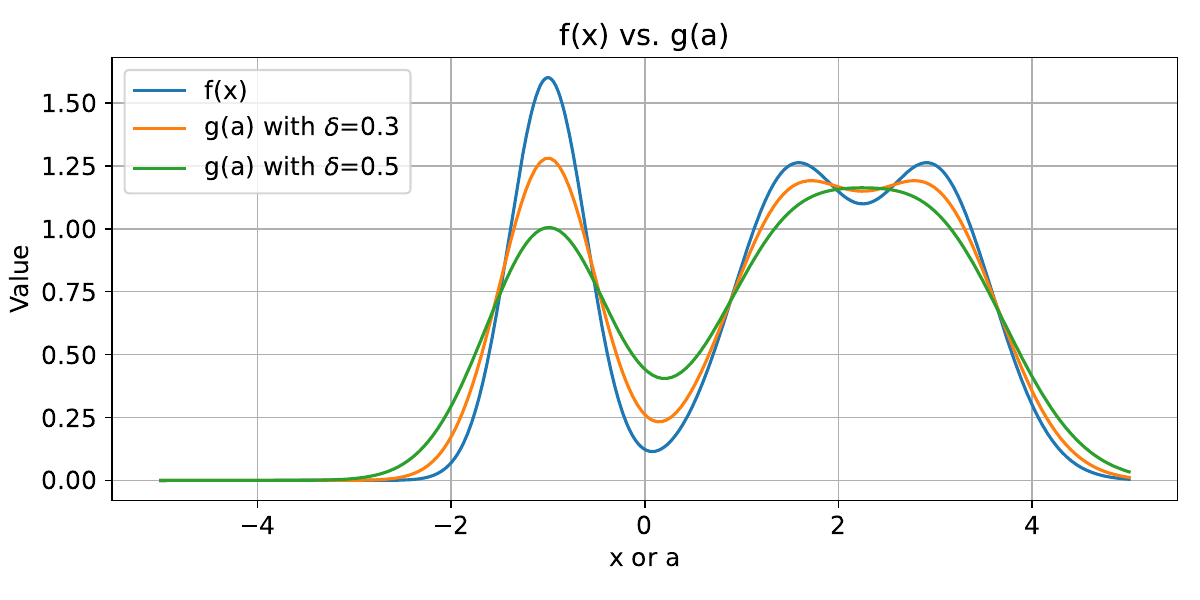}
    \caption{Illustration of multi-resolution example. Here, the query space is given by $a=(\xi,\delta)$ and $g(a)$ is a convolution of $f(x)$ with a Gaussian p.d.f. $\phi(x;\xi,\delta^2)$. Thus, $g$ corresponding to different resolutions $\delta$ approximates the true function $f$ using different precision. When $\delta = 0.5$, $f$ and $g$ have different optima.}
    \label{fig:multi-resolution}
\end{figure}

\paragraph{Motivation: Multi-resolution Queries} 
\label{sec: motivation Multi-fidelity} 
The IQBO framework covers the special case of multi-resolution queries if we expand the action space $\mathcal{A}$ to indicate the observation resolution level.
Let $a_t=(\xi_t,\delta_t)$, so that $X|A=a_t$ is a distribution with location $\xi_t$ and with a scale/resolution parameter $\delta_t$. 
That is 
$$\mathbb{E}[X|A = a_t] = \xi_t; \quad Var[X|A = a_t] = c \delta_t^2 I,$$
where $c$ is a known constant.
For example, when $p(x|a)$ is a 1D Gaussian distribution $\mathcal N(x;\xi,\delta^2)$, the noisy observation 
$z_t=\int_{-\infty}^{+\infty} f(x)\mathcal \phi(x;\xi_t,\delta_t^2)dx+\epsilon_t,$
where $\phi$ denotes the p.d.f. of Gaussian distribution;
when $p(x|a)$ is then uniform distribution on $(\xi-\delta,\xi+\delta)$, we observe $z_t=\frac{1}{2\delta_t}\int_{\xi_t-\delta_t}^{\xi_t+\delta_t} f(x)dx+\epsilon_t,$
with $\epsilon_t\sim \mathcal{N}\left(0,\sigma^{2}\right)$. 

At every iteration, we can control both the location and the resolution of the query of the function to be optimized, and observe an averaging approximation over the function.
As $\delta_t\to 0$, the integrated feedback reflects the precise function value on $f$, that is $g((\xi_t, \delta_t)) \to f(\xi_t)$.
We illustrate the multi-resolution example in Figure \ref{fig:multi-resolution}.

It is natural to associate a cost $\lambda(\delta)$ to the resolution $\delta$ such that smaller $\delta$ corresponds to a more precise observation and carries a higher cost. IQBO then becomes an attractive option for reducing the cost of queries, where we will seek to design policies where queries lead to the most information \emph{per unit cost}. 
The multi-resolution setup is close to the multi-fidelity Bayesian optimisation (MFBO) problem, which has been studied widely to learn functions via cheaper surrogates with varying cost and fidelity \citep{kandasamy2016gaussian, kandasamy2017multifidelitybayesianoptimisationcontinuous,wu2019practicalmultifidelitybayesianoptimization}. 
Our setup additionally considers indirect queries and transformation via conditional expectations, which is not covered in the standard MFBO problem. 



Applications include cloud satellite imagery, similar to the setup described in \citet{chau_deconditional_2021}. Satellite imagery often varies in resolution and is collected over different locations and times. Our objective is to find a location with the highest value of an atmospheric variable, such as cloud top temperature, over a high-resolution spatial domain. 
However, different level of resolutions would associate with different cost to collect the observations. 
Similarly, in malaria incidence models \citep{law_variational_2018}, we may be interested to find the highest underlying malaria incidence rate ($f(x)$) in each 1km by 1km region (pixel, $x$), while the available observations are aggregated incidence rate ($z$) of administrative units (bags of pixels, $a$). 



\paragraph{Evaluation Metric}
Our goal is to optimise the function $f$ given the indirect feedback over the observations from the function $g$. 
Define the optimal point of function $f$ as $x_\ast := {\arg\max}_{x \in \mathcal{X}} f(x)$. 
We evaluate the policy by regrets, which reflect how much our selections differ from $f(x_\ast)$. 
We first define the \textit{instant regret} in Definition \ref{defi: instant regret} to show the quality of our selections in query space $\mathcal{A}$. 

\begin{definition}[Instant Regret]
\label{defi: instant regret}
   Define instant regret at iteration $T$ as
\[
  r^g_{T}:=f(x_{*})- \max_{t \in [1,T]} g(a_{t}).
\] 
\end{definition}

In many applications, we are interested in identifying the optimal points in the target domain $\mathcal{X}$. 
Similar to the fixed budget best arm identification task in multi-armed bandits or BO \cite{audibert_best_2010}, the agent additionally recommends $x_T \in \mathcal{X}$ as a putative optimum at the iteration (budget) $T$, according to its policy $\pi^\prime$. 
We then define the simple regret in Definition \ref{equ: simple regret}, which evaluates the ability to find optimal points in $\mathcal{X}$ space after allocating the budget to querying the indirect space $\mathcal{A}$ and receiving the corresponding integrated feedback.


\begin{definition}[Simple regret]
\label{equ: simple regret}
    The simple regret at iteration $T$ is defined as
\begin{align*}
    r_T := f(x_\ast) - f(x_T).
\end{align*}
\end{definition}

We now discuss the assumptions we make in order to develop our algorithm and perform our analysis.
In Assumption \ref{ass: gp}, we model $f$ as a sample from a Gaussian Process (GP) to possess a degree of smoothness, which also enables us to deal with the integrated feedback and estimate $f$ in a Bayesian framework. 
We also make assumption for the conditional distribution to allow us to learn enough information around the optimal point of the function $f$ by querying $a$ indirectly in Assumption \ref{ass: p(x|a)}. 

\begin{assumption}
\label{ass: gp}
The function $f$ is a sample from a zero-mean Gaussian Process with known covariance function $k$.
\end{assumption}

\begin{assumption}
\label{ass: p(x|a)}
For any $\epsilon \geq 0$ and $\zeta \geq 0$, there exists a query $a \in \mathcal{A}$ such that the conditional distribution satisfies
\[
p\left(x \in [x_\ast - \epsilon,\ x_\ast + \epsilon] \,\middle|\, a \right) \geq 1 - \zeta.
\]
\end{assumption}







\section{Background and Related Work}
\label{sec: related work}


\paragraph{Gaussian Process and Conditional Mean Process}
A Gaussian process (GP) \( \{f(x)\}_{x \in \mathcal{X}} \) is a collection of random variables indexed by elements \( x \) in the input space \( \mathcal{X} \), such that any finite subset \( \{f(x_i)\}_{i=1,\dots,m} \) follows a multivariate Gaussian distribution \citep{rasmussen_gaussian_2006}.  
A GP is fully specified by its mean and covariance functions:
\begin{align}
   m(x) &= \mathbb{E}_f[f(x)], \\
k(x, x') &= \mathbb{E}_f[(f(x) - m(x))(f(x') - m(x'))]. 
\end{align}
Let $\mathcal{H}_k$ and $\mathcal{H}_\ell$ denote the reproducing kernel Hilbert spaces (RKHSs) over the input space $\mathcal{X}$ and the query space $\mathcal{A}$, with associated positive definite kernels $k(x, x')$ and $\ell(a, a')$, respectively.  
In the multi-resolution setting, the kernel on queries $a = (\xi, \delta) \in \mathcal{A}$ can be constructed as a product kernel that captures similarity in both location and resolution:
$\ell(a, a') = \ell_\Xi(\xi, \xi') \cdot \ell_\Delta(\delta, \delta').$
Since $f \sim \mathcal{GP}(m, k)$ and $g(a) = \mathbb{E}[f(X) \mid A = a]$ is defined as a linear functional of $f$, $g$ is itself a Gaussian process almost surely: $g \sim \mathcal{GP}(\nu, q)$,  
with mean and covariance functions:
\begin{align}
\label{equ: posterior of g, mean}
    \nu(a) &= \mathbb{E}_X[m(X) \mid A = a], \\
\label{equ: posterior of g, cov}
    q(a, a') &= \mathbb{E}_{X, X'}[k(X, X') \mid A = a, A' = a'],
\end{align}
for all $a, a' \in \mathcal{A}$, with $(X, A), (X', A') \sim p(x, a)$.

At iteration $t$, given data $\mathcal{D}^{t-1}$, we follow the Conditional Mean Process (CMP) framework \citep{chau_deconditional_2021} to infer the posterior mean $\nu_{t-1}(a)$ and posterior covariance $q_{t-1}(a, a')$ of $g$, as well as the posterior mean $m_{t-1}(x)$ and covariance $k_{t-1}(x, x')$ of $f$.  
Due to space constraints, we defer the full description of CMP inference and posterior computation to Appendix~\ref{appendix: CMP}.

\paragraph{BO policies} 
Several types of BO methods are popular in the literature. 
One type is Upper Confidence Bound (UCB) based policies, where one queries the data point with the highest upper confidence bound, which is constructed by the posterior mean plus weighted posterior standard deviation. Gaussian Process Upper Confidence Bound (GPUCB) comes with sound theoretical regret bounds \citep{srinivas_gpucb:information-theoretic_2012}, but the balance weights may be hard to adjust in practice. 
Entropy search policies \citep{hennig2012entropy_search} select the queries such that corresponding observations provide the highest mutual information or information gain with respect to the optimum. 
\citet{wang2017MES} further improved the computational efficiency by proposing the Max-Value Entropy Search (MES), which studied mutual information w.r.t. the target function value at the optimum, which is usually in a much lower dimension than the input space. 
MES was also extended in a multi-fidelity setting \citep{takeno2020MFMES}, inspired by which we proposed a hierarchical tree search algorithm with multi-resolution. 
We refer to \cite{garnett_bayesopIQBOok_2023} for a comprehensive review of BO methods.

\paragraph{Indirect Query and Integrated feedback}
Aggregated feedback was studied by \cite{zhang2022gaussian} under the Gaussian process bandits setting.
Their aggregated feedback is averaging over local continuous non-overleap areas, while we consider a general reward given by conditional expectation with unknown $p(x|a)$. 
They aimed to recommend a local area in $\mathcal{X}$ which gives the highest possible average score, while our goal is to find an optimizer for $f$. 
Linear partial monitoring \citep{kirschner2023linearpartialmonitoringsequential} provided a general framework to link actions and feedback via linear observation maps. Different from the linear bandits model, the reward and observation features are decoupled. If the conditional distributions are known in our setting, then the integrated feedback can be expressed as one special case of linear partial monitoring. 
Apart from that, transductive linear bandits \citep{fiez2019sequentialexperimentaldesigntransductive} considered the measurement and actions are separated, where they assumed the two spaces shared the same unknown parameters. 
In stochastic optimisation (chapter 11.9 in \cite{garnett_bayesopIQBOok_2023}), a line of work studied the opposite setting of ours, where direct feedback was observed but to optimise the aggregated target, either as in linear functionals \citep{mutný2023experimentaldesignlinearfunctionals, garciabarcos2020robustpolicysearchrobot}, with uncertainty inputs \citep{oliveira2019bayesianoptimisationuncertaininputs} or in conditional mean embedding setting \citep{chowdhury2020active}. 


\section{Methods}

In this section, we first propose a new policy called Conditional Max-value Entropy Search (CMES) to address our new setting. 
Then we propose a hierarchical search method in Algorithm \ref{alg: CMETS} to improve the computational efficiency in the multi-resolution setting. 
We focus on the design of policy $\pi$ to recommend $a_t \in \mathcal{A}$ in this paper. 
For policy $\pi^\prime$, where no direct queries are available, we simply recommend $x$ with the maximum posterior mean of $f$, i.e.
$x_t := {\arg\max}_{x \in \mathcal{X}} m_{t-1}(x)$. Note that in the IQBO setting, it is the policy $\pi$ which balances exploration and exploitation, while $\pi'$ simply posits the optimal input $x$ given the current knowledge (pure exploitation).

\begin{algorithm}[t!]
\caption{Conditional Max-value Entropy Search}
\label{alg: CMES}
    \begin{algorithmic}
        \STATE \textbf{Input:} dimension $d$, total iteration $T$, dataset $\mathcal{D}^0 = [\{(x_j, a_j)\}_{j = 1}^N, \{\}]$, query space $\mathcal{A}$.
        \FOR{$t = 1$ to $T$}
            \STATE Select $a_{t} = {\arg\max}_{a \in \mathcal{A}} \mathbb{I}(z; f_\ast| a, \mathcal{D}^{t-1})$ via Eq. (\ref{equ: CMES approximation}).
            \STATE Observe feedback $z_t = g(a_t) + \epsilon_t$.
            \STATE Update dataset $\mathcal{D}^t = \mathcal{D}^{t-1} \cup \{(a_t, z_t)\}$.
            \STATE Update posteriors of $f$ and $g$ given $\mathcal{D}^t$.
        \ENDFOR
        \STATE \textbf{Return} $x_{rec} = \arg\max_{x \in \mathcal{X}} m_{t-1}(x)$ 
    \end{algorithmic}
\end{algorithm}

\subsection{Conditional Max-value Entropy Search}
\label{sec: CMES}
Standard BO policies, such as GPUCB, PI or EI, optimize an expression which involves the posterior mean and variance of the target function $f$ to obtain the next query in $\mathcal X$. This is not applicable to IQBO since the queries reside in a different domain $\mathcal A$. Nor is it appropriate to apply these policies to the posterior of $g$, as that is not the function of interest. 
Thus, we turn our attention to the entropy-search type of policies. 

We develop a new policy, called Conditional Max-value Entropy Search (CMES) applicable to IQBO, which follows a strategy similar to Max-value Entropy Search (MES) \citep{wang2017MES}. At iteration $t$, CMES maximizes the mutual information between the noisy observation $z$ and $f_\ast = f(x_\ast)$ conditionally on the query $a$ and on the available datas $\mathcal{D}^{t-1}$ as defined in Eq. (\ref{equ: DMES}). In other words, CMES queries $a \in \mathcal{A}$ that provides the maximum information gain about the $f_\ast$, 
\begin{align}
 \label{equ: DMES}
    a_{t} := {\arg\max}_{a \in \mathcal{A}} \mathbb{I}(z; f_\ast| a, \mathcal{D}^{t-1}). 
\end{align}
The mutual information $\mathbb{I}(z;f_\ast| a, \mathcal{D}^{t-1})$ can be calculated as
\begin{align}
\label{equ: DMES cal}
    \mathrm{H}[z|a, \mathcal{D}^{t-1}] - \mathbb{E}_{f_\ast| \mathcal{D}^{t-1}}[\mathrm{H}[z|f_\ast, a, \mathcal{D}^{t-1}]],
\end{align}
with differential entropy given by $\mathrm{H}[u] :=-\int p(u) \log p(u) d x$.

\paragraph{CMES Approximation}
To compute the first term, we note that $z=\int f(x)p(x| a)dx+\epsilon$ has a normal distribution conditionally on $a$ so its differential entropy can be directly calculated from its posterior predictive variance as $$\mathrm{H}[z|a,\mathcal{D}^{t-1}] = 0.5 {\log}[2 \pi e (q_{t-1}(a,a) + \sigma^2)],$$
where $q_{t-1}$ is the posterior covariance function of $g$ given data $\mathcal{D}^{t-1}$, and $\sigma^2$ is the variance of noise term $\epsilon$.
For the second term, the expectation can be approximated using Monte Carlo estimation, i.e. 
\begin{align*}
    \mathbb{E}_{f_\ast| \mathcal{D}^{t-1}}[\mathrm{H}[z|f_\ast, a, \mathcal{D}^{t-1}]] \approx \frac{1}{|\mathcal{F}^t_\ast|}\sum_{f^t_\ast \in \mathcal{F}^t_\ast}  \mathrm{H}[z|f^t_\ast, a, \mathcal{D}^{t-1}],
\end{align*}
where $\mathcal{F}^t_\ast$ is the set of optimal value samples, which can be collected via the Thompson sampling with the maximum of samples from the posterior at each iteration, or via the Gumbel sampling introduced in \cite{wang2017MES}.
We use truncated normal distribution $p(z|z \leq f_\ast, a,\mathcal{D}^{t-1})$ to approximate $p(z|f_\ast, a,\mathcal{D}^{t-1})$ to allow analytically calculating $\mathrm{H}[z|f_\ast, a,\mathcal{D}^{t-1}]$. 
We further discuss the choice of using $p(z|g(a) \leq f_\ast, \mathcal{D}^{t-1})$ to approximate $p(z|f_\ast, a, \mathcal{D}^{t-1})$ in the Appendix \ref{sec: CMES appendix}. 
Define the \textit{standardized improvement margin (SIM)} w.r.t. $f_*$ as
\begin{align}
\label{equ: gamma}
    \gamma_{f_*}(a) := \frac{f_* - \nu_{t-1}(a)}{\sqrt{q_{t-1}(a, a)}}.
\end{align}
Further define the function 
$$h(\alpha):= \frac{\alpha \phi(\alpha)}{2\Phi(\alpha)} - \log \Phi(\alpha),$$ 
where $\phi, \Phi$ are the probability density function and the cumulative density function of a standard Gaussian distribution. 
Then the entropy term in the second term in Eq. (\ref{equ: DMES cal}) can be approximated by 
$$
   \mathrm{H}[z|f_\ast, a, \mathcal{D}^{t-1}] \approx 0.5 \log \left(2 \pi e (q_{t-1}(a,a) + \sigma^2)\right) - h(\gamma_{f_*}(a)).
$$
Finally we can approximate Eq. (\ref{equ: DMES cal}) by
\begin{align}
\label{equ: CMES approximation}
    \mathbb{I}(z;f_\ast| a, \mathcal{D}^{t-1}) \approx \frac{1}{|\mathcal{F}^t_\ast|} \sum_{f_*^t \in \mathcal{F}^t_\ast} h(\gamma_{f^t_*}(a)).
\end{align}

The CMES policy picks the $a \in \mathcal{A}$ that maximises the mutual information as defined in Eq. (\ref{equ: DMES cal}), and approximated in Eq. (\ref{equ: CMES approximation}). 
Since $h$ is a strictly decreasing function, maximising Eq. (\ref{equ: CMES approximation}) seeks small SIM terms $\gamma_{f_*^t}(a)$.
This term becomes small when either the posterior mean $\nu_{t-1}(a)$ is large, favouring exploitation of the current estimated maximum; or the posterior standard deviation $\sqrt{q_{t-1}(a, a)}$ is large, encouraging exploration of uncertain regions.







\subsection{Conditional Max-value Entropy Tree Search 
}
\label{sec:hierarchical}

\begin{algorithm}[t!]
\caption{Conditional Max-value Entropy Tree Search}
\label{alg: CMETS}
    \begin{algorithmic}
        \STATE \textbf{Input:} dimension $d$, $K$-ary tree, cost $\lambda_l$ ($\lambda_l < \lambda_{l+1}$), total budget $B$, dataset $\mathcal{D}^0 = [\{(x_j, a_j)\}_{j = 1}^N, \{\}]$.
        \STATE \textbf{Initialise:} tree $\mathcal{T}_0 = \{(\xi_{root},\delta_{0})\}$; leaves $\mathcal{L}_0 = \mathcal{T}_0$; initial active candidate set $\mathcal{A}_0 = \mathcal{L}_0 \cup \mathcal{L}_0^\prime$, 
        where $\mathcal{L}_0^\prime$ is the set containing all children of each node in $\mathcal{L}_0$; 
        $t = 1$.
        \WHILE{$B > 0$}
            \STATE Select $a_{t} = {\arg\max}_{a \in \mathcal{A}_{t-1}} \mathbb{I}(z; f_\ast| a, \mathcal{D}^{t-1})/ \lambda_{l(a)}$
            \STATE Observe feedback $z_t = g(a_t) + \epsilon_{t}$.
            \STATE Update dataset $\mathcal{D}^t = \mathcal{D}^{t-1} + \{(a_t, z_t)\}$.
            \STATE Update posteriors of $f$ and $g$ given $\mathcal{D}^t$.
            \IF{$a_{t} \in \mathcal{L}_{t-1}^\prime$ }
                \STATE $\mathcal{L}_{t} = \mathcal{L}_{t-1} - \{\text{parent}(a_{t})\} + \{\text{children}(a_{t})\}$
                \STATE $\mathcal{L}_t^\prime = \mathcal{L}_t^\prime - \{a_t\}$
            \ELSE{}
                \STATE $\mathcal{L}_{t} = \mathcal{L}_{t-1} - \{a_{t}\} + \{\text{children}(a_{t})\}$
            \ENDIF
            \FOR{node $i \in \mathcal{L}_{t}$:}
                \STATE{$\mathcal{L}^\prime_t = \mathcal{L}^\prime_t$ $\cup \{\text{children}(i$)\}}
            \ENDFOR
            \STATE Update active candidate set $\mathcal{A}_t = \mathcal{L}_{t} \cup \mathcal{L}_{t}^\prime$
            \STATE Update budget $B = B- 
                \lambda_{l(a_t)}$, $t = t + 1$.
        \ENDWHILE
        \STATE \textbf{Return} $x_{rec} = \arg\max_{x \in \mathcal{X}} m_{t-1}(x)$ 
    \end{algorithmic}
\end{algorithm}

In this section, we focus on the multi-resolution approximations discussed in Section \ref{sec: motivation Multi-fidelity} and incorporate them through hierarchical partitioning to improve cost efficiency. 
We consider a pre-defined $K$-ary tree, the root represents the query space $\mathcal{A}$, and for each node, there are $K$ children partitioning the space their parent represents. 
Each node is denoted as $a = (\xi, \delta_{l(a)})$ as specified in Section \ref{sec: problem setup and cmp} multi-resolution setting, where the resolution level $\delta_{l(a)}$ depends on the height of the node $l(a)$. 
The higher the level $l(a)$ of the node is, the smaller $\delta_{l(a)}$ is, and the agent can observe a more precise approximation reward for the function $f$. 
Correspondingly, querying nodes in a higher level of the tree is associated with a higher cost $\lambda_{l(a)}$.
We assume the cost function is known. 

We propose the Conditional Max-value Entropy Tree Search in Algorithm \ref{alg: CMETS}, where we partition the continuous space adaptively following a tree structure with a limited resource budget $B$.
In each iteration $t$, we maintain an active candidate set $\mathcal{A}_t = \mathcal{L}_{t} \cup \mathcal{L}_{t}^\prime$,  where $\mathcal{L}_{t}$ is the set of all leaves at iteration $t$ and $\mathcal{L}_{t}^\prime$ is the set of children of each node in $\mathcal{L}_{t}$.
$\mathcal{L}_{0}$ is initialised as the root node.
In each iteration, an action $a_t = (\xi_t, \delta_{l(a_t)})$ is selected from the active candidate set $\mathcal{A}_{t-1}$, balancing the information gain and cost by weighting our proposed policy CMES with the inverse $1/ \lambda_{l(a)}$. 
Specifically, the selected action $a_t$ is determined as the one that maximizes the ratio of mutual information $\mathbb{I}(z, f_\ast | a, \mathcal{D}^{t-1})$ to the cost $\lambda_{l(a)}$ associated with its height.
The action selection criterion aims to optimize information gain per unit cost, ensuring efficient use of the budget. 
After selecting an action $a_t$, the corresponding node of the tree is then split to explore more precise regions and the active candidate set $\mathcal{A}_t$ is updated accordingly. 

\section{Theoretical analysis}
\label{sec: theory}

In this section, we derive the regret upper bounds for IQBO in a variety of settings, assuming the conditional distribution $p(x|a)$ is known. The full proofs of all theoretical results are given in the Appendix \ref{sec: proof appendix}.





\subsection{Instant Regret Bound of CMES Policy}

In this section, we show the upper bound of instant regret (Definition \ref{defi: instant regret}) for our proposed policy CMES. 
We start by showing the equivalence among CMES and other baseline policies (introduced in Section \ref{sec: related work}) in Lemma \ref{lemma: equivalence among policies}. 
This is an important link for proof of regret bound, as we will follow the proof technique in \cite{wang2017MES, pmlr-v51-wang16f}, which relies on the equivalence to GPUCB \citep{srinivas_gpucb:information-theoretic_2012}. 

\begin{lemma} 
\label{lemma: equivalence among policies}
We show that our CMES (with one sample $f^t_\ast$ of Eq. (\ref{equ: CMES approximation}) approximation) is equivalent to:
\begin{itemize}
    \item MES w.r.t function $g$ and use $f^t_\ast$ as the optimal value sample (instead of $g_\ast$).
    \item EST w.r.t. function $g$ and $m = f^t_\ast$, where $m$ denotes the the function optimal in EST.
    \item GPUCB w.r.t. function $g$ with $\beta_t^{1/2} = \min_{a \in \mathcal{A}} \frac{f^t_* - \nu_{t-1}(a)}{\sqrt{q_{t-1}(a,a)}}$, where $\beta_t$ is the coefficient of the posterior standard deviation.
\end{itemize}
\end{lemma}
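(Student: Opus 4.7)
The plan is to reduce all three equivalences to a single monotonicity fact about the CMES acquisition and then read off the three other policies as reformulations of the same argmax. Throughout, I replace the target GP $f$ with the CMP-induced GP $g$ whose mean and variance are $\nu_t(a)$ and $(\sigma^g_t(a))^2 = q_t(a,a)$; the integrated observation is $z = g(a)+\epsilon$ with $\epsilon\sim\mathcal N(0,\sigma^2)$.

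\textbf{Step 1: reduce CMES to a one-dimensional argmin.} With a single sample $f^t_\ast$, the CMES acquisition in Eq.~(\ref{equ: CMES approximation}) is $\phi(\gamma_a)$ with $\gamma_a := \gamma^g_{f^t_\ast}(a) = (f^t_\ast-\nu_t(a))/\sigma^g_t(a)$ and
\[
\phi(\gamma) \;=\; \frac{\gamma\,\psi(\gamma)}{2\,\Psi(\gamma)} \;-\; \log \Psi(\gamma).
\]
I would prove $\phi$ is strictly decreasing on $\mathbb{R}$ by differentiating and using the standard identity $\psi'(\gamma)=-\gamma\psi(\gamma)$ together with the Mills-ratio bound $\psi(\gamma)/\Psi(\gamma) \ge -\gamma$, so that the entropy-reduction interpretation (mutual information is nonnegative and shrinks as the truncation constraint loosens) gives $\phi'(\gamma)<0$. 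This is exactly Lemma~3.2 of \citet{wang2017MES} applied to our $\gamma^g_{f^t_\ast}(a)$, so once this monotonicity is in hand, CMES reduces to
\[
a_t \;=\; \arg\min_{a\in\mathcal A}\; \frac{f^t_\ast - \nu_t(a)}{\sigma^g_t(a)}.
\]

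\textbf{Step 2: MES and EST equivalences.} The MES acquisition applied to the GP $g$ with optimum sample $m$ is, after the standard truncated-Gaussian approximation, precisely $\phi\!\left((m-\nu_t(a))/\sigma^g_t(a)\right)$. Substituting $m=f^t_\ast$ recovers Eq.~(\ref{equ: CMES approximation}) term by term, which is the first bullet. For EST, by definition \citep{pmlr-v51-wang16f} one picks $\arg\min_a (m-\nu_t(a))/\sigma^g_t(a)$ where $m$ is the estimated optimum; setting $m=f^t_\ast$ and applying EST to $g$ gives exactly the minimizer in Step~1, which is the second bullet.

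\textbf{Step 3: GPUCB equivalence.} Let $a^\ast$ be the CMES/EST minimizer from Step~1 and set $\beta_t^{1/2} := \gamma_{a^\ast} = \min_{a\in\mathcal A}(f^t_\ast - \nu_t(a))/\sigma^g_t(a)$, as in the statement. I would then verify that $a^\ast$ is a GPUCB maximizer on $g$ with this $\beta_t$ by a direct calculation: at $a=a^\ast$ one has $\nu_t(a^\ast) + \beta_t^{1/2}\sigma^g_t(a^\ast) = f^t_\ast$, while for any other $a$ the definition of $\beta_t^{1/2}$ gives $\beta_t^{1/2} \le (f^t_\ast - \nu_t(a))/\sigma^g_t(a)$, hence $\nu_t(a) + \beta_t^{1/2}\sigma^g_t(a) \le f^t_\ast$. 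So the GPUCB objective is uniformly bounded by $f^t_\ast$ and attains this bound at $a^\ast$, establishing the third bullet (breaking ties by the EST argmin if needed; points with $\sigma^g_t(a)=0$ can be handled separately since they contribute no information).

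\textbf{Expected main obstacle.} The monotonicity of $\phi$ in Step~1 is the only nontrivial analytical ingredient: $\phi'(\gamma)$ involves the ratio $\psi(\gamma)/\Psi(\gamma)$ and does not factor cleanly. I expect to borrow the Mills-ratio argument from \citet{wang2017MES} rather than recomputing it. The remaining two equivalences are then essentially bookkeeping: MES is by inspection of the acquisition formula, and GPUCB is by the ``tangent-to-constant-level'' calculation above which reproduces the EST--GPUCB correspondence of \citet{pmlr-v51-wang16f} in our $g$-domain.
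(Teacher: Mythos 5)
Your proposal is correct and follows essentially the same route as the paper: the paper's proof simply cites \citet{wang2017MES} (and \citet{pmlr-v51-wang16f}) for the MES--EST--GPUCB equivalences and observes that CMES is MES on $g$ with $g_\ast$ replaced by $f^t_\ast$, with the accompanying remark resting on exactly the monotonicity of Eq.~(\ref{equ: CMES approximation}) in $\gamma^g_{f^t_\ast}(a)$ that you make the centerpiece of Step~1, while your Step~3 just unpacks the standard EST--GPUCB tangent calculation that the paper leaves to the citation. The only small quibble is that establishing $\phi'(\gamma)<0$ actually needs the \emph{upper} Mills-ratio bound $\psi(\gamma)/\Psi(\gamma)<(\gamma^2+1)/(-\gamma)$ for $\gamma<0$ rather than the lower bound you quote, but since you defer to the lemma of \citet{wang2017MES} for this step (as does the paper), this does not affect the argument.
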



Lemma \ref{lemma: equivalence among policies} shows that the trick to deal with the indirect query in mismatch space is to maintain the posteriors on both function $f$ and $g$, where we use posterior mean and standard deviation of $g$ for query, but the optimal value samples of $f$ via the posterior of $f$ for plug-in parameters.

In Lemma \ref{lemma: concentration g EST}, we show the concentration inequality w.r.t. function $g$, corresponding to $g(a)|\mathcal{D}^{t-1} \sim \mathcal{N}(\nu_{t-1}(a), q_{t-1}(a,a))$.

\begin{lemma}[Concentration inequality of $g$]
\label{lemma: concentration g EST}
    Pick $\delta \in(0,1)$ and set $\zeta_t=\left(2 \log \left(\frac{\pi_t}{2 \delta}\right)\right)^{\frac{1}{2}}$, where $\sum_{t=1}^T \pi_t^{-1} \leq$ $1, \pi_t>0$. Then, it holds that $P\left[|\nu_{t-1}\left(a\right)-g\left(a\right)| \leq\right.$ $\left.\zeta_t \sqrt{q_{t-1}(a,a)}, \forall t \in[1, T]\right] \geq 1-\delta$.
\end{lemma}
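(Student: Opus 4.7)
The plan is to combine the standard one-dimensional Gaussian tail bound with a union bound over the $T$ iterations, after first justifying that the posterior of $g$ at the adaptively chosen point $a_t$ is Gaussian when we condition on the history.

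First, I would observe that by the Conditional Mean Process construction recalled in Section \ref{sec: problem setup and cmp}, the posterior over $g$ given $\mathcal{D}_1$ and the history $\mathcal{D}_2^{t-1}$ is a Gaussian process with mean $\nu_{t-1}$ and covariance $q_{t-1}$. Hence for any deterministic $a \in \mathcal A$ we have $g(a)\mid \mathcal{D}_1,\mathcal{D}_2^{t-1} \sim \mathcal N(\nu_{t-1}(a),(\sigma^g_{t-1}(a))^2)$. The point $a_t$ is chosen by the policy using only information in $\mathcal D_1\cup \mathcal D_2^{t-1}$, so it is measurable with respect to that sigma-algebra. Conditionally on this sigma-algebra, $a_t$ can be treated as fixed, and therefore the standardized deviation $(g(a_t)-\nu_{t-1}(a_t))/\sigma^g_{t-1}(a_t)$ is $\mathcal N(0,1)$.

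Next I would apply the standard Gaussian tail bound: if $Z\sim \mathcal N(0,1)$ and $c>0$, then $\Pr[Z<-c]\le \tfrac{1}{2}e^{-c^{2}/2}$. Setting $c=\zeta_t$ and substituting $\zeta_t^{2}=2\log(\pi_t/(2\delta))$ gives, conditionally on the history and hence unconditionally,
\[
\Pr\bigl[\nu_{t-1}(a_t)-g(a_t)>\zeta_t\,\sigma^g_{t-1}(a_t)\bigr]\le \tfrac{1}{2}e^{-\zeta_t^{2}/2}=\frac{\delta}{\pi_t}.
\]
A union bound over $t\in[1,T]$ then yields
\[
\Pr\bigl[\exists\, t\in[1,T]:\nu_{t-1}(a_t)-g(a_t)>\zeta_t\,\sigma^g_{t-1}(a_t)\bigr]\le \sum_{t=1}^{T}\frac{\delta}{\pi_t}\le \delta,
\]
using the hypothesis $\sum_{t=1}^{T}\pi_t^{-1}\le 1$. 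Taking the complement gives the stated bound.

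The only delicate step, and the one I would expect to deserve explicit mention, is handling the adaptivity of $a_t$: a naive application of Gaussian concentration requires the evaluation point to be deterministic, whereas here $a_t$ depends on past random observations. The resolution is the standard conditioning argument sketched above, and in particular we do \emph{not} need a supremum-over-$\mathcal A$ discretization (as one would for GP-UCB with recommendations in $\mathcal A$), since the concentration is only required along the realized sequence $\{a_t\}_{t=1}^{T}$. Everything else is routine.
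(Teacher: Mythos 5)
Your proof is correct and follows essentially the same route as the paper, which invokes the Gaussian tail bound $\Pr\{r>c\}\le \tfrac{1}{2}e^{-c^{2}/2}$ (as in its proof of the concentration inequality for $f$) combined with a union bound over the evaluated points $\{a_t\}_{t=1}^{T}$, following \cite{pmlr-v51-wang16f}. Your explicit handling of the adaptivity of $a_t$ by conditioning on the history is the standard argument implicit in that reference, so there is no substantive difference.
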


With the concentration inequality in Lemma \ref{lemma: concentration g EST}, we upper bound the regret that occurs in time step $t$ w.r.t. function $f$ with posterior $q_{t-1}(a_t,a_t)$ in Lemma \ref{lemma: EST regret bound}.
We further show the links between the posterior $q_{t-1}(a_t,a_t)$ and information gain $I(\bm{z}_T; f)$ in Lemma \ref{lemma: information gain - var}. 

\begin{lemma}
\label{lemma: EST regret bound}
    With $\kappa_t = \min _{a \in \mathcal{A}} \frac{\hat{l}_t-\nu_{t-1}(a)}{\sqrt{q_{t-1}(a,a)}}$, where $\hat{l}_t \geq \max _{x \in \mathcal{X}} f(x)$, $\forall t \in[1, T]$, the regret $\tilde{r}^g_t$ at time step $t$ is upper bounded as 
    $$\tilde{r}^g_t := \max_{x \in \mathcal{X}} f(x) - g(a_t) \leq\left(\kappa_t+\zeta_t\right) \sqrt{q_{t-1}(a_t,a_t)}.$$
\end{lemma}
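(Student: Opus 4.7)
The plan is to decompose the instant-step regret into an \emph{optimism gap} and a \emph{concentration gap}, and bound each by one of the preceding lemmas. Setting $x_\ast = \arg\max_{x \in \mathcal X} f(x)$ and invoking the hypothesis $\hat m_t \geq f(x_\ast)$, one writes
\[
\tilde r^g_t \;=\; f(x_\ast) - g(a_t) \;\leq\; \hat m_t - g(a_t) \;=\; \bigl(\hat m_t - \nu_{t-1}(a_t)\bigr) + \bigl(\nu_{t-1}(a_t) - g(a_t)\bigr).
\]

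The concentration gap $\nu_{t-1}(a_t) - g(a_t)$ is handled directly by Lemma \ref{lemma: concentration g EST}: on an event of probability at least $1-\delta$, it holds that $\nu_{t-1}(a_t) - g(a_t) \leq \zeta_t \sigma^g_{t-1}(a_t)$ uniformly in $t \in [1, T]$. The optimism gap $\hat m_t - \nu_{t-1}(a_t)$ is where Lemma \ref{lemma: equivalence among policies} enters. With the single-sample CMES approximation, the equivalence identifies the CMES selection with the EST selection on the surrogate GP $g$ with optimum target $\hat m_t$, that is, $a_t = \arg\min_{a \in \mathcal A} (\hat m_t - \nu_{t-1}(a))/\sigma^g_{t-1}(a)$. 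Consequently, the $\min$ in the definition of $\kappa_t$ is realised at $a_t$ itself, collapsing the definition to the equality $\hat m_t - \nu_{t-1}(a_t) = \kappa_t \sigma^g_{t-1}(a_t)$. Summing the two bounds yields $\tilde r^g_t \leq (\kappa_t + \zeta_t) \sigma^g_{t-1}(a_t)$ on the same high-probability event.

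The only step I expect to require care is the optimism bound. From the definition of $\kappa_t$ as a minimum over $\mathcal A$, one only obtains the inequality $\hat m_t - \nu_{t-1}(a_t) \geq \kappa_t \sigma^g_{t-1}(a_t)$, which is in the \emph{wrong} direction for controlling regret. The reverse inequality---in fact equality---only becomes available after pinning down that $a_t$ itself is the argmin, which is exactly what Lemma \ref{lemma: equivalence among policies} delivers through its EST formulation. Once this identification is in place, the remaining combination with Lemma \ref{lemma: concentration g EST} is two lines of algebra, with no further approximation required.
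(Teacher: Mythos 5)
Your proof is correct and takes essentially the same route as the paper: it combines the optimism relation $\nu_{t-1}(a_t)+\kappa_t\,\sigma^g_{t-1}(a_t)\geq \hat m_t \geq f(x_\ast)$ with the concentration bound of Lemma \ref{lemma: concentration g EST}, exactly as the paper does. Your explicit remark that the bare definition of $\kappa_t$ only yields the wrong-direction inequality, and that one must use the EST/CMES selection rule (Lemma \ref{lemma: equivalence among policies}) to identify $a_t$ as the minimiser so that $\hat m_t-\nu_{t-1}(a_t)=\kappa_t\,\sigma^g_{t-1}(a_t)$, is precisely what the paper's terse phrase ``from the definition of $\kappa_t$'' leaves implicit.
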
 

\begin{remark}
To upper bound simple regret w.r.t $f(x_t)$, we need to bound the difference between posterior means of $a_t$ (w.r.t $g$) and $x_t$ (w.r.t. $f$), i.e. $r_t \leq \nu_{t-1}(a_t) + \kappa_t \sqrt{q_{t-1}(a_t,a_t)} + \zeta_t \sigma_{t-1}\left(x_t\right) - m_{t-1}\left(x_t\right)$. 
This would require additional assumptions between $a_t$ and $x_t$ (and assumptions for conditional distributions). 
\end{remark}


\begin{lemma}
\label{lemma: information gain - var}
    The information gain for the selected points can be expressed in terms of the predictive variances. If $\bm{z}_T = (z(a_t)) \in \mathbb{R}^T$,
    \begin{align*}
        \mathbb{I}(\bm{z}_T; f) = \frac{1}{2} \sum_{t=1}^T \log (1 + \sigma^{-2} q_{t-1}(a_t, a_t)).
    \end{align*}
\end{lemma}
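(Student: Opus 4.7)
The plan is to adapt the classical telescoping argument from Srinivas et al.\ (2012, Lemma 5.3) to the IQBO observation model, where the noisy feedback $z_t = g(a_t) + \epsilon_t$ relates to the conditional mean process $g \sim \mathcal{GP}(\nu, q)$ rather than directly to $f$. The essential point is that all relevant random variables are jointly Gaussian, so mutual information reduces to a difference of differential entropies computable from posterior variances of $g$.

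First I would rewrite the information gain using the identity $I(\mathbf{z}_T; \mathbf{f}_T) = H(\mathbf{z}_T) - H(\mathbf{z}_T \mid \mathbf{f}_T)$. Since conditionally on $f$ (and hence on the target values $\mathbf{f}_T$), the observation $z_t$ depends only on the independent noise $\epsilon_t \sim \mathcal{N}(0, \sigma^2)$, the conditional term factorises as
\begin{equation*}
H(\mathbf{z}_T \mid \mathbf{f}_T) = \tfrac{T}{2} \log\!\bigl(2\pi e \sigma^2\bigr).
\end{equation*}
(Here I am using that conditioning on $\mathbf{f}_T$ carries the same information as conditioning on the entire function $f$ for the purpose of determining $g(a_t)$, since $g(a_t) = \mathbb{E}_X[f(X)\mid A=a_t]$ is a deterministic functional of $f$; hence $\mathbf{z}_T \mid \mathbf{f}_T$ is pure noise. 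This is the one subtle step — the mild abuse of notation in the statement really hinges on treating $\mathbf{f}_T$ as a sufficient statistic for $(g(a_t))_{t=1}^T$.)

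Second, I would apply the chain rule of differential entropy,
\begin{equation*}
H(\mathbf{z}_T) = \sum_{t=1}^T H\!\bigl(z_t \mid z_{1:t-1}\bigr),
\end{equation*}
and invoke the fact that $z_t \mid z_{1:t-1}$ is Gaussian because $(g, \epsilon)$ are jointly Gaussian. Its predictive variance is precisely $q_{t-1}(a_t, a_t) + \sigma^2$, where $q_{t-1}$ is the CMP posterior covariance of $g$ after conditioning on $\mathbf{z}_{t-1}$ (this is the standard GP posterior update, applied to the induced process $g$ whose kernel is $q$). Therefore
\begin{equation*}
H(z_t \mid z_{1:t-1}) = \tfrac{1}{2} \log\!\bigl(2\pi e (q_{t-1}(a_t, a_t) + \sigma^2)\bigr).
\end{equation*}

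Finally, subtracting the two entropies and simplifying telescopes each term into $\tfrac{1}{2}\log(1 + \sigma^{-2} q_{t-1}(a_t, a_t))$, yielding the claimed sum. The only obstacle I anticipate is the notational one flagged above: justifying that $\mathbf{f}_T$ serves as a sufficient statistic for the deterministic part of $\mathbf{z}_T$, which in a fully rigorous write-up might be preferable to phrase as $I(\mathbf{z}_T; g(\mathbf{a}_T))$ (note $g(a_t)$ is a linear functional of $f$, so the two mutual informations coincide under appropriate assumptions on the support of $p(x|a)$). Once that is in place the remainder is a direct Gaussian computation and requires no new assumptions beyond those already used to define the CMP posterior.
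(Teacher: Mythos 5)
Your proposal matches the paper's proof essentially step for step: the same decomposition $I(\bm{z}_T;\bm{f}_T)=H(\bm{z}_T)-H(\bm{z}_T\mid\bm{f}_T)$, the same observation that conditionally on the target the feedback is pure $\mathcal{N}(0,\sigma^2)$ noise so the conditional entropy is $\tfrac{T}{2}\log(2\pi e\sigma^2)$, and the same chain-rule/induction over $H(z_t\mid \bm{z}_{t-1})$ using the CMP posterior predictive variance $q_{t-1}(a_t,a_t)+\sigma^2$. The subtlety you flag --- that conditioning on the finite vector $\bm{f}_T$ is not literally the same as conditioning on the whole function $f$, and that the clean statement would be $I(\bm{z}_T; g(\bm{a}_T))$ or $I(\bm{z}_T; f)$ --- is glossed over in the paper too (it conditions on $f$ and then asserts the result "does not depend on $\bm{f}_T$"), so your explicit acknowledgment is, if anything, the more careful treatment.
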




With Lemma \ref{lemma: EST regret bound} and \ref{lemma: information gain - var}, we now show the instant regret upper bound in Theorem \ref{theo: Instant Regret Bound CMES}.
Let $F$ be the cumulative probability distribution for the maximum of any function $f$ sampled from $GP(m, k)$ over the compact space $\mathcal{X} \subset \mathbb{R}^d$, where $k\left(x, x^{\prime}\right) \leq 1, \forall x, x^{\prime} \in \mathcal{X}$. 
Let $w:=F\left(f_*\right) \in(0,1)$, and assume the observation noise is iid $\mathcal{N}(0, \sigma^2)$.
Define the maximum information gain between $\bm{z}$ and $f$ with observations of size $T$ as
\begin{align}
\label{equ: max information gain}
    \rho_T := \sup_{A \subset \mathcal{A}: |A| = T} \mathbb{I}(\bm{z}_A; f).
\end{align}

\begin{theorem}[Instant Regret Bound for CMES]
\label{theo: Instant Regret Bound CMES}
If $a_t$ is selected via CMES (with one sample $f^t_\ast$ of Eq. (\ref{equ: CMES approximation}) approximation),
then with probability at least $1-\delta$, in $T^{\prime}=\sum_{i=1}^T \log _w \frac{\delta}{2 \pi_i}$ number of iterations, the instant regret satisfies
$$
r^g_{T^{\prime}} \leq \sqrt{\frac{C \rho_T}{T}}\left(\kappa_{t^*}+\zeta_T\right),
$$
where $C=2 / \log \left(1+\sigma^{-2}\right)$ and $\zeta_T=\left(2 \log \left(\frac{\pi_T}{\delta}\right)\right)^{\frac{1}{2}} ; \pi_t$ satisfies $\sum_{i=1}^T \pi_i^{-1} \leq 1$ and $\pi_t>0$, and $t^*=\arg \max _t \kappa_t$ with $\kappa_t = \min _{x \in \mathcal{A}, f_*^t>f_*} \gamma_{f_*^t}(a)$. 
\end{theorem}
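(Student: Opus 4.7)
My plan is to follow the template of the MES regret argument of \cite{wang2017MES}, exploiting the equivalences from Lemma \ref{lemma: equivalence among policies} to translate CMES into an EST/GPUCB-style analysis on the process $g$, and then close the bound via the information-gain machinery. The first step is to argue, via Lemma \ref{lemma: equivalence among policies}, that a CMES step with a single sampled optimum $f_*^t$ coincides with EST on $g$ using the plug-in $m = f_*^t$, so that the query rule \eqref{equ: a_t for regret bound} inherits the EST regret mechanics in Lemma \ref{lemma: EST regret bound}. For Lemma \ref{lemma: EST regret bound} to apply at round $t$, however, we must have $f_*^t \geq f_* = \max_{x\in\mathcal X} f(x)$ (so that $\hat m_t \geq f_*$ in the notation there); this is where the inflated horizon $T'$ enters.

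Second, I would do the probability bookkeeping. Each sample $f_*^t$ is drawn from $F$, the CDF of $\max_{x\in\mathcal X} f(x)$, so $\Pr[f_*^t \leq f_*] = w$. To guarantee, for each effective round $i \in [1,T]$, that at least one of $n_i$ i.i.d. draws exceeds $f_*$ with failure probability at most $\delta/(2\pi_i)$, we need $w^{n_i} \leq \delta/(2\pi_i)$, i.e.\ $n_i \geq \log_w \frac{\delta}{2\pi_i}$. Summing gives the horizon $T' = \sum_{i=1}^T \log_w \frac{\delta}{2\pi_i}$, and a union bound over $i$ (using $\sum_i \pi_i^{-1} \leq 1$) makes the total sampling failure probability at most $\delta/2$. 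Combining with Lemma \ref{lemma: concentration g EST} invoked at confidence $\delta/2$, we obtain a joint good event of probability $\geq 1-\delta$ on which, for each of the $T$ effective rounds $t$, both $f_*^t \geq f_*$ and $\nu_{t-1}(a_t) - g(a_t) \leq \zeta_t \sigma^g_{t-1}(a_t)$ hold, so Lemma \ref{lemma: EST regret bound} yields $\tilde r^g_t = f_* - g(a_t) \leq (\kappa_t+\zeta_t)\,\sigma^g_{t-1}(a_t)$.

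Third, I turn single-step bounds into the claimed rate. Because the instant regret in Definition \ref{defi: instant regret} is a minimum over past rounds, $r^g_{T'} \leq \min_{t \in \text{eff}} \tilde r^g_t \leq \tfrac{1}{T}\sum_{t \in \text{eff}} \tilde r^g_t \leq (\kappa_{t^*} + \zeta_T)\,\tfrac{1}{T}\sum_{t=1}^T \sigma^g_{t-1}(a_t)$, where monotonicity of $\zeta_t$ and the definition $t^* = \arg\max_t \kappa_t$ are used. Cauchy--Schwarz gives $\sum_t \sigma^g_{t-1}(a_t) \leq \sqrt{T\sum_t q_{t-1}(a_t,a_t)}$, and the standard inequality $x \leq \tfrac{\log(1+x)}{\log(1+\sigma^{-2})}\sigma^{-2}$ for $x \in [0,\sigma^{-2}]$ (valid since $k \leq 1$ implies $q_{t-1}(a,a)\leq 1$) combined with Lemma \ref{lemma: information gain - var} yields $\sum_t q_{t-1}(a_t,a_t) \leq C\,\rho_T$ with $C = 2/\log(1+\sigma^{-2})$. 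Plugging in gives $r^g_{T'} \leq \sqrt{C\rho_T/T}\,(\kappa_{t^*}+\zeta_T)$, the claimed bound.

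The main obstacle I anticipate is not the information-gain telescoping, which is fairly standard once the link to GPUCB/EST is in place, but rather the careful probability accounting that justifies conditioning on $f_*^t \geq f_*$ at every effective round while still using an adaptive analysis: one must be careful that the $T$ effective samples are not cherry-picked in a way that invalidates the concentration bound, and that the union bound over $\pi_i$ weights correctly accounts for both the sampling failure and the Gaussian concentration event simultaneously. Handling this cleanly amounts to splitting the $\delta$ budget between the two sources of failure and invoking the anytime version of Lemma \ref{lemma: concentration g EST} so that the concentration holds at every round of the $T'$-length run, not just at the effective ones.
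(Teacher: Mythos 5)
Your proposal is correct and follows essentially the same route as the paper's proof: split the $T'$ iterations into $T$ blocks so each block contains a sample $f_*^{t_i}>f_*$ with the $\delta/2$ budget, invoke Lemma \ref{lemma: concentration g EST} (at confidence $\delta/2$) together with Lemma \ref{lemma: EST regret bound} at the effective rounds, and close via Lemma \ref{lemma: information gain - var} and Cauchy--Schwarz with $C=2/\log(1+\sigma^{-2})$. The only difference is cosmetic — you factor out $(\kappa_{t^*}+\zeta_T)$ before applying Cauchy--Schwarz to $\sum_t \sigma^g_{t-1}(a_t)$, whereas the paper squares the per-round regrets first — which yields the identical bound.
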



\subsection{Regret of Multi-resolution Queries}
\label{sec:budgeted-iqbo}
We now analyze the regret in the setting where the cost of queries depends on their resolution, as motivated in Section \ref{sec: motivation Multi-fidelity} and Algorithm \ref{alg: CMETS}.
Recall that we write $a_t = (\xi_t, \delta_t)$, with $\xi_t$ the location of the query and $\delta_t$ the resolution, associated with the cost $\lambda(\delta_t)$. We wish to perform cheap queries at the beginning, and increase resolution (i.e. decrease $\delta_t$) as iterations proceed. Instead of optimizing $\delta_t$ at each iteration as it is a part of $a_t$, like in Section \ref{sec: motivation Multi-fidelity}, for the purposes of analysis, we will consider a deterministic schedule of decreasing $\delta_t$ and optimize only $\xi_t$. This setting is considerably simpler since we effectively operate on the same space, as $\xi_t\in\mathcal X$, but note that $f$ and $g$ may still have different optima (an example of this is given in Fig. \ref{fig:multi-resolution}). 
We now investigate the instant regret bounds when selecting $\xi_t$
using the state-of-art baselines such as GP-UCB \citep{srinivas_gpucb:information-theoretic_2012} with respect to the posterior of target function $f$ and show that one can match regret bounds of direct queries with a cost that grows at significantly slower rates.

We rewrite
$g(a_t) = \int f(\xi_t + \eta_t) p_t(\eta_t) d\eta_t$
where $\eta_t=x-\xi_t$ is a random variable for which, conditionally on $a_t$, $\mathbb{E}\left[\eta_t\right]=0$ and $\mathrm{Var}\left[\eta_t\right]=\delta_{t}^{2}I$. We will make an assumption on tail decay of the distribution of $\eta_t$ in the following theorem. We note that this assumption is satisfied if $\eta_t/\delta_t$ has a sub-Gaussian distribution.
Define the maximal information gain for \emph{direct queries} (i.e. $\delta_t=0$) as
$$
    \gamma_T := \sup_{S \subset \mathcal{X}: |S| = T} \mathbb{I}(\mathbf{f}_S+\boldsymbol{\epsilon}, \mathbf{f}_S)= \sup_{S \subset \mathcal{X}: |S| = T}  \frac{1}{2} \log|I+\sigma^{-2}K_S|
$$
where $K_S$ is the covariance matrix of $\mathbf{f}_S=[f(x)]_{x\in S}$ and $\boldsymbol{\epsilon}\sim \mathcal{N}(0,\sigma^2 I_T)$.





\begin{theorem}[Instant regret bound]
\label{theo: instant regret bound gpucb}
Let $\delta_t$ follow a deterministic schedule and let $\xi_{t}=\arg\max_{\xi\in \mathcal X} m_{t-1}(\xi)+ \sqrt{ \beta_{t} k_{t-1}(\xi,\xi)}$. Assume that the third moment of $\Vert \eta_t/\delta_t \Vert$ is uniformly bounded, i.e. there exists $C>0$ such that $\sup_{t\geq 0}\mathbb{E}\Vert \eta_t/\delta_t\Vert^3\leq C$. Further, assume that the true function $f$ has a bounded second derivative, i.e. $\mathrm{Tr}(\nabla^2 f(x))\leq 2M$. Then the instant regret (Definition \ref{defi: instant regret}) is upper bounded by $\mathcal{O} \left(\sqrt{\frac{\beta_T \gamma_T + \sum_{t=1}^{T}\delta_{t}^{4}}{T}}\right)$. Moreover, if we set $\delta_t^2=O(t^{-1/2}(\log t)^{d/2})$ and use a Gaussian kernel prior on $f$, the upper bound of instant regret is $\mathcal{O}(\sqrt{\beta_T (\log T)^{d+1} /T})$. 
\end{theorem}

This bound matches that of direct queries of $f$ \citep{srinivas_gpucb:information-theoretic_2012}. To get further intuition, assume that $\eta_t$ is uniform on $[-\tau_{t},\tau_{t}]^{d}$, so $\delta^{2}_t=\frac{\tau_{t}^{2}}{3}$. This means that our window size for the queries can decrease as slowly as $\tau_{t}=O(t^{-1/4}(\log t)^{d/4})$ without affecting the regret. If the cost of queries scales as the inverse of the window size, this can lead to large savings, as illustrated in the Appendix.

\section{Experiments}
\label{sec: Experiments}

We implement our experiments with Python 3.10 with GpyTorch and BoTorch libraries. 
We evaluated our algorithm by both instant regret and the simple regret defined in Definition \ref{defi: instant regret} and \ref{equ: simple regret}. 
We choose state-of-the-art Bayesian optimization algorithms, including Max-value Entropy Search (MES),  Upper Confidence Bound (UCB) and Expected Improvement (EI) as our baselines \citep{garnett_bayesopIQBOok_2023}, and modify them to fit into our new setting. 
To select $a_t \in \mathcal{A}$, we ignore $\{(x_i, a_i)\}_{i=1}^N$ and update the posteriors of function $g$ based on sequentially collected $\{(a_j, z_j)\}_{j=1}^t$. 
We test our algorithms on the synthetic optimization function Branin in 2D, and construct the conditional distribution as $\mathcal{N}(x| h(a), \delta^2)$ with $h$ to be linear and non-linear transformations and $a \in [0,1]^2$.
We specify the detailed setup in Appendix \ref{appendix: experimental settings}.
We consider two experimental settings.

Firstly, we test Algorithm \ref{alg: CMES} with fixed iterations and fixed resolution $\delta = 0.5$ and discretize the search space into grids. We show both the simple regret and instant for 100 iterations in Figure \ref{fig:iteration regret}. 
The confidence interval is shown with 10 independent experiment runs. 
Our proposed method CMES outperforms baselines for both regrets, especially in early iterations, since our approach directly optimizes the function $f$ and uses the data to learn conditional distributions. 

\begin{figure}
    \centering
    \includegraphics[scale=0.32]{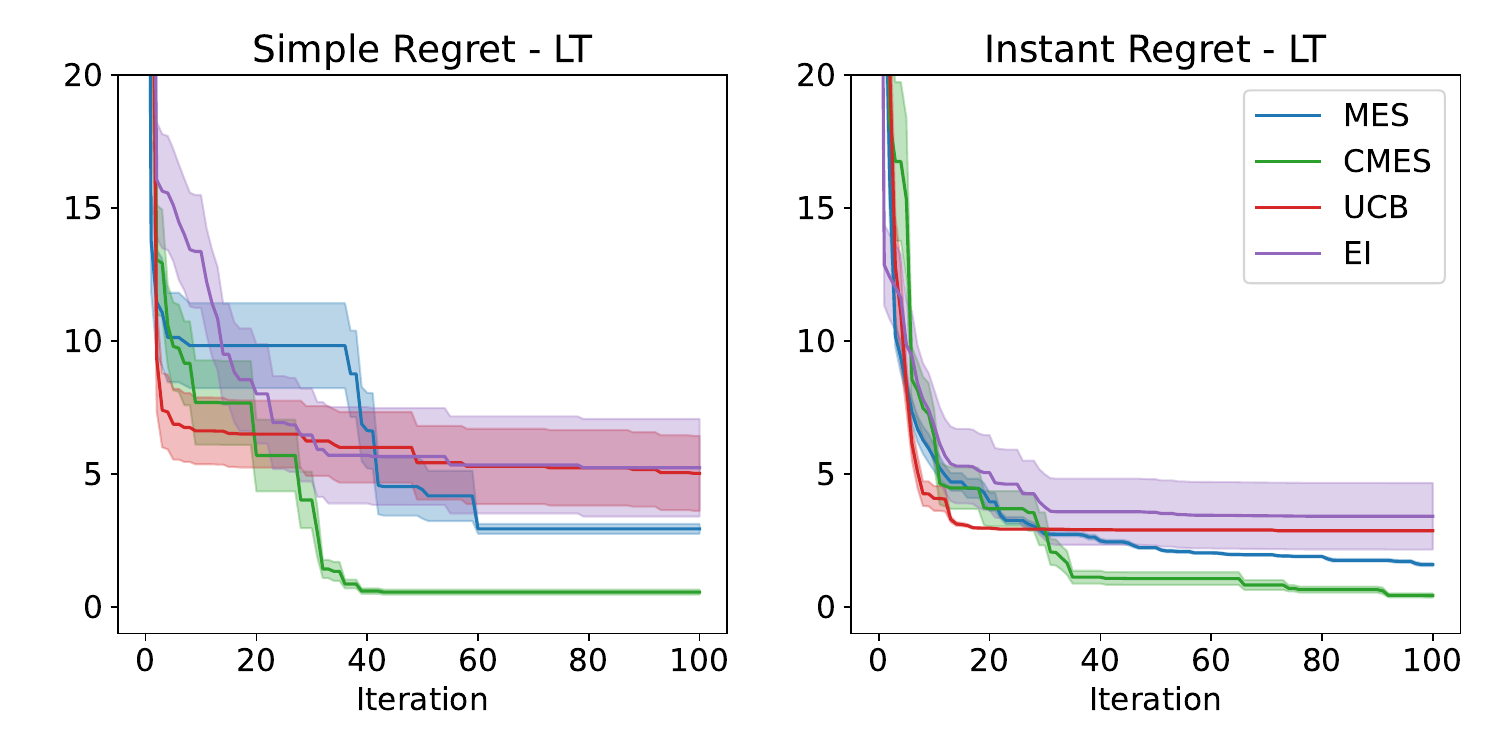}
    \includegraphics[scale=0.32]{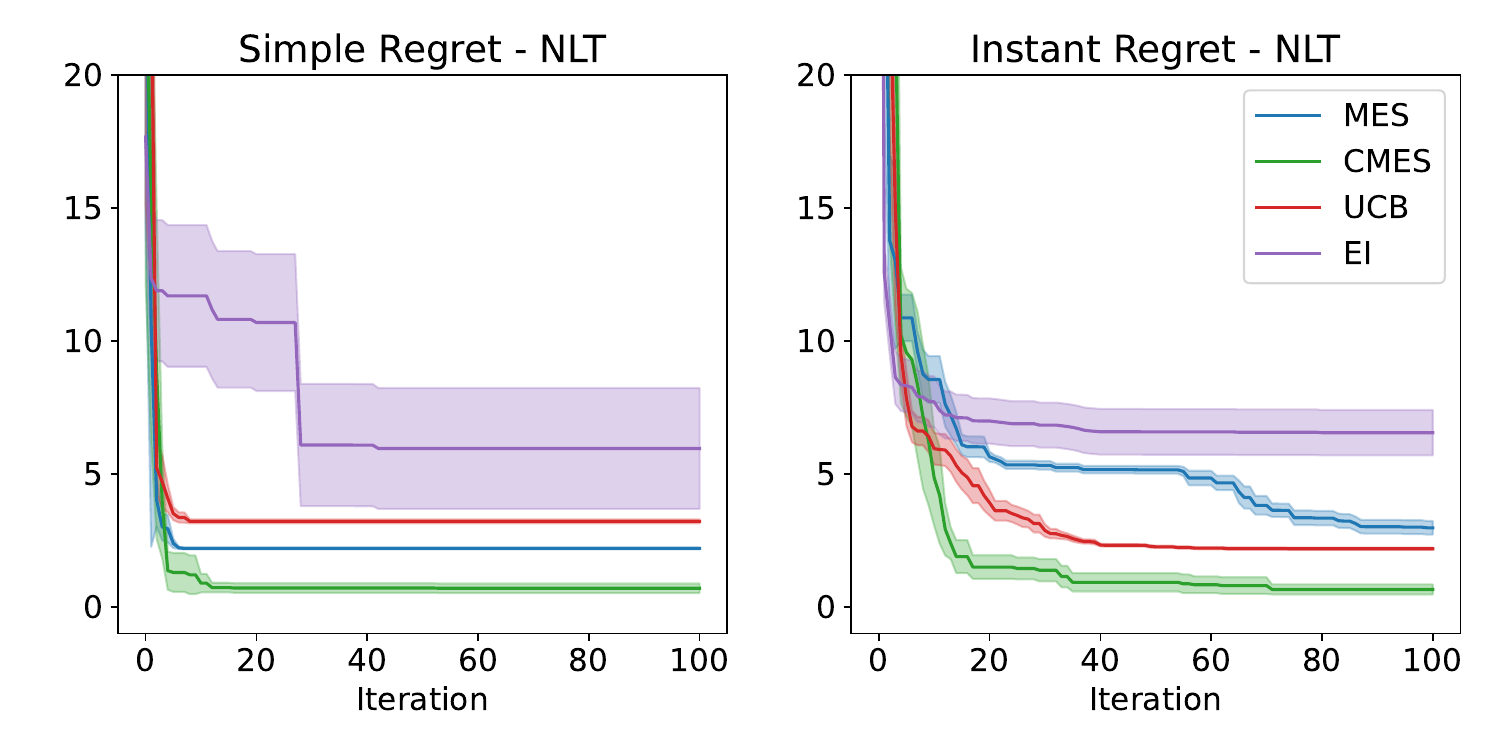}
    \caption{Regrets for Algorithm \ref{alg: CMES}. 
    LT: linear transformations, NLT: non-linear transformations. }
    \label{fig:iteration regret}
    \
\end{figure}

\begin{figure}[t!]
    \centering
    \includegraphics[scale=0.32]{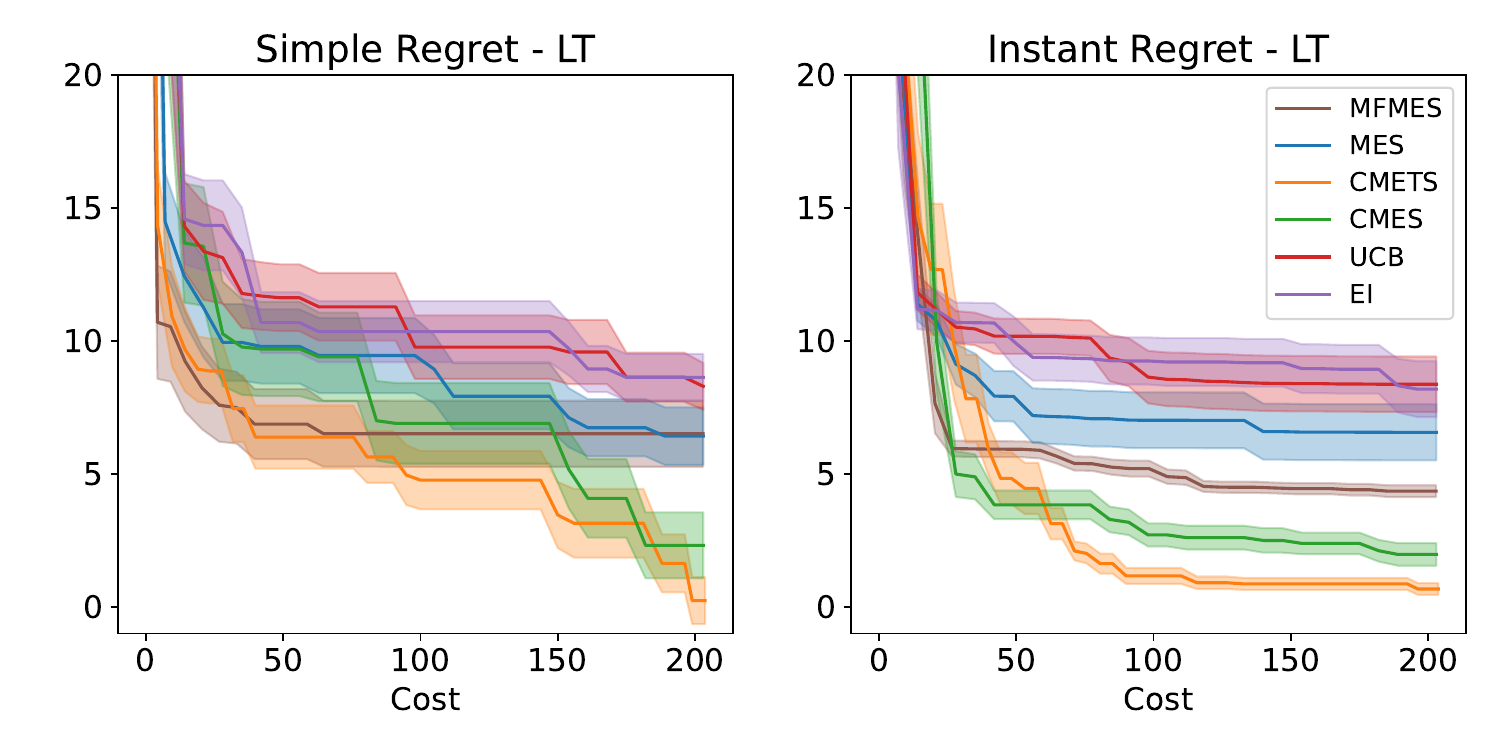}
    \includegraphics[scale=0.32]{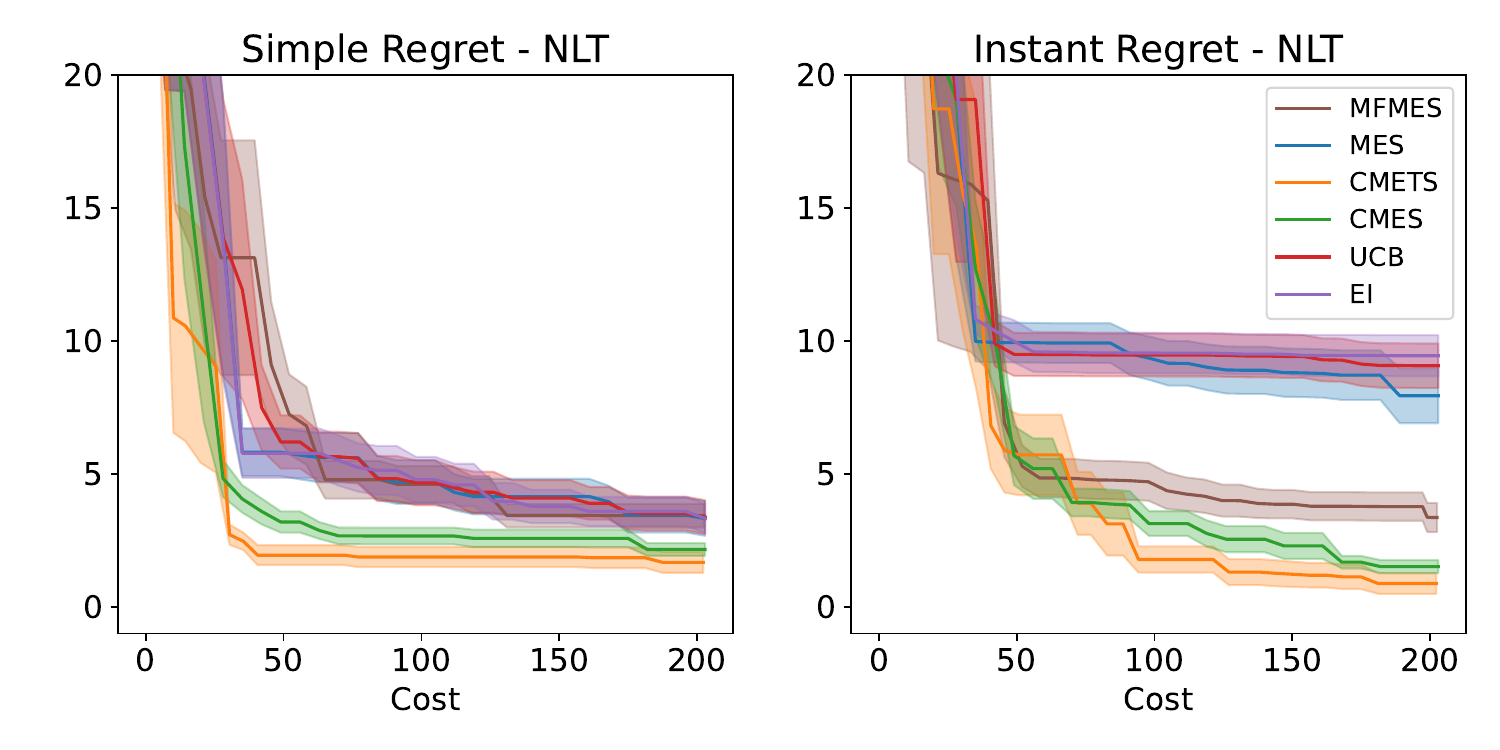}
    \caption{Regrets for Algorithm \ref{alg: CMETS}, budgeted based multi-resolution setting 
    (cost and resolution level associated with node height in the tree). 
    LT: linear transformations, NLT: non-linear transformations.
    }
    
    \label{fig:experiment}
\end{figure}

Secondly, we design experiments to test Algorithm \ref{alg: CMETS} for budgeted multi-resolution setting. 
We construct a binary tree in 2D space with pre-defined highest height $l_{max} = 6$. That is, we split each node into 4 non-overlap 2D regions with equal size. Each node is represented by its centre. The noise standard deviations and costs are set according to the height of nodes (in a logarithm rate). 
For non-hierarchical policies (CMES, MES, UCB, EI), we use all leaf nodes as the candidate sets. 
For hierarchical (multi-resolution) policies, the candidate set is updated as an active candidate set as specified in Algorithm \ref{alg: CMETS}.
We add an additional baseline model, the Multi-Fidelity MES (MFMES) \citep{takeno2020MFMES} and adapted it into the tree-level fidelity levels. To be consistent with other models and for fair comparisons, the fidelity level is associated with the tree level as shown in Algorithm \ref{alg: CMETS} and cannot be queried separately.  
For hierarchical search, we set the cost as $\lambda_a = 0.5 * {\log}_2(1/d(a))$, where the radius $d(a) =\frac{1}{2^{l(a)+1}}$ and $l(a)$ is the height level the node in the tree. And the resolution is set to $\delta = 0.5/\lambda_a$. 
We show regret results in Figure \ref{fig:experiment}, where the x-axis is the total cost budget, and the y-axis is the simple regret (left column) and instant regret (right column).
We can observe policies (CMETS, MFMES) that are resolution-aware and with hierarchical search outperform their corresponding base algorithms (CMES, MES). 
Out of all the policies, our proposed algorithms CMES and CMETS are the best ones since they explicitly consider integrated feedback.

\section{Conclusion}

We introduce a new setting for Bayesian Optimization, namely Indirect Query Bayesian Optimization (IQBO), where the feedback is integrated via a conditional distribution.
We motivate our new setting with indirect query bandits and multi-resolution queries. 
We propose the Conditional Max-Value Entropy Search (CMES) in Algorithm \ref{alg: CMES} to address this new setting.
We further propose 
Conditional Max-Value Entropy Tree Search (CMETS) in Algorithm \ref{alg: CMETS} for the multi-resolution setting and improve computational efficiency. 
Our approach is supported by theoretical analysis demonstrating the upper bound of regrets.  Empirical evaluations on synthetic optimization functions demonstrate the effectiveness of our approach in comparison with existing state-of-the-art methods.

\bibliography{ref}

\newpage

\clearpage
\appendix
\setcounter{secnumdepth}{1}  
\renewcommand{\thesection}{\Alph{section}}  
\makeatletter
\renewcommand{\@seccntformat}[1]{\csname the#1\endcsname\quad}  
\makeatother

\section*{Appendix}

\section{Posterior Inference via CMP}
\label{appendix: CMP}

In this section, we present the posterior updates derived using the Conditional Mean Process (CMP) framework \citep{chau_deconditional_2021}. Let $\mathcal{H}_k$ and $\mathcal{H}_\ell$ be the reproducing kernel Hilbert spaces (RKHSs) over $\mathcal{X}$ and $\mathcal{A}$ with kernels $k(x, x')$ and $\ell(a, a')$, respectively.

Define the canonical feature maps as $k_x := k(x, \cdot)$ and $\ell_a := \ell(a, \cdot)$ for $x \in \mathcal{X}$ and $a \in \mathcal{A}$. Let the feature matrices and Gram matrices be defined as:
\begin{align*}
    \Phi_{\mathbf{x}} &:= \begin{bmatrix} k_{x_1} & \cdots & k_{x_N} \end{bmatrix}, \quad \mathbf{K}_{\mathbf{xx}} := \Phi_{\mathbf{x}}^\top \Phi_{\mathbf{x}} = [k(x_i, x_j)]_{i,j=1}^N, \\
    \Phi_{\mathbf{a}} &:= \begin{bmatrix} \ell_{a_1} & \cdots & \ell_{a_N} \end{bmatrix}, \quad \mathbf{L}_{\mathbf{aa}} := \Phi_{\mathbf{a}}^\top \Phi_{\mathbf{a}} = [\ell(a_i, a_j)]_{i,j=1}^N.
\end{align*}

Kernel mean embeddings provide a powerful nonparametric framework for representing and reasoning about distributions in RKHSs \citep{muandet_kernel_2017}. The Conditional Mean Embedding (CME) \citep{song_hilbert_2009} embeds a conditional distribution $p(x \mid a)$ into $\mathcal{H}_k$ via
\[
\mu_{X \mid A = a} := \mathbb{E}_X[k_X \mid A = a] \in \mathcal{H}_k.
\]
Define the Conditional Mean Operator (CMO) $C_{X \mid A}: \mathcal{H}_\ell \to \mathcal{H}_k$ such that:
\[
C_{X \mid A} \ell_a = \mu_{X \mid A=a}, \quad C_{X \mid A}^\top f = \mathbb{E}_X[f(X) \mid A = \cdot], \quad \forall f \in \mathcal{H}_k.
\]

We denote the sequentially collected online data as $\{(a_t, z_t)\}_{t=1}^T$, with
\[
\mathbf{a}_t := \begin{bmatrix} a_1 & \cdots & a_t \end{bmatrix}^\top, \quad \mathbf{z}_t := \begin{bmatrix} z_1 & \cdots & z_t \end{bmatrix}^\top,
\]
and the offline data $\{(x_j, a_j)\}_{j=1}^N$ as
\[
\mathbf{x} := \begin{bmatrix} x_1 & \cdots & x_N \end{bmatrix}^\top, \quad \mathbf{a} := \begin{bmatrix} a_1 & \cdots & a_N \end{bmatrix}^\top.
\]

Let $\mathbf{Q}_{\mathbf{aa}} := q(\mathbf{a}, \mathbf{a})$ denote the covariance of the conditional mean process $g(a)$. Then the joint distribution of $f(\mathbf{x})$ and $\mathbf{z}_t$ is:
\begin{align}
    \left[\begin{array}{c}
        f(\mathbf{x}) \\
        \mathbf{z}_t
    \end{array}\right]
    \mid \mathbf{a}, \mathbf{a}_t \sim \mathcal{N}\left(
    \left[\begin{array}{c}
        m(\mathbf{x}) \\
        \nu(\mathbf{a}_t)
    \end{array}\right],
    \left[\begin{array}{cc}
        \mathbf{K}_{\mathbf{xx}} & \boldsymbol{\Upsilon} \\
        \boldsymbol{\Upsilon}^\top & \mathbf{Q}_{\mathbf{a}_t \mathbf{a}_t} + \sigma^2 \mathbf{I}_t
    \end{array}\right]
    \right),
\end{align}
where the cross-covariance $\boldsymbol{\Upsilon} = \operatorname{Cov}(f(\mathbf{x}), \mathbf{z}_t) = \Phi_{\mathbf{x}}^\top C_{X \mid A} \Psi_{\mathbf{a}_t}$.
Given the offline data $[\mathbf{x}, \mathbf{a}]$, the posterior of the function $g$ evaluated at the test inputs $\mathbf{a}_t$ is given by:

\begin{align*}
    \nu_t(\mathbf{a}_t) =& \ell(\mathbf{a}_t, \mathbf{a}) \left( \mathbf{L}_{\mathbf{aa}} + N \lambda \mathbf{I}_N \right)^{-1} \Phi_{\mathbf{x}}^\top m, \\
    q_t(\mathbf{a}_t, \mathbf{a}_t) =& \ell(\mathbf{a}_t, \mathbf{a}) \left( \mathbf{L}_{\mathbf{aa}} + N \lambda \mathbf{I}_N \right)^{-1} \mathbf{K}_{\mathbf{xx}} \cdot \\
    & \left( \mathbf{L}_{\mathbf{aa}} + N \lambda \mathbf{I}_N \right)^{-1} \ell(\mathbf{a}, \mathbf{a}_t).
\end{align*}

Let $C_{AA} := \mathbb{E}[\ell_A \otimes \ell_A] \in \mathcal{H}_\ell \otimes \mathcal{H}_\ell$. Assume $\ell_a \in \operatorname{Range}(C_{AA})$ and $k_x \in \operatorname{Range}(C_{X \mid A} C_{AA} C_{X \mid A}^\top)$ for all $a \in \mathcal{A}$ and $x \in \mathcal{X}$. Then, conditioned on observations $\mathbf{z}_t$ with homoscedastic noise $\sigma^2$, the posterior distribution of $f$ is given by:
\begin{align*}
    m_t(x) &= m(x) + k(x, \mathbf{x}) \mathbf{W}_t \mathbf{M}^t_{\mathbf{a}_t \mathbf{a}_t} \left( \mathbf{z}_t - \nu(\mathbf{a}_t) \right), \\
    k_t(x, x') &= k(x, x') - k(x, \mathbf{x}) \mathbf{W}_t \mathbf{M}^t_{\mathbf{a}_t \mathbf{a}_t} \mathbf{W}_t^\top k(\mathbf{x}, x'),
\end{align*}
where
\begin{align*}
    \mathbf{M}^t_{\mathbf{a}_t \mathbf{a}_t} &= \left( q_t(\mathbf{a}_t \mathbf{a}_t) + \sigma^2 \mathbf{I}_t \right)^{-1}, \quad \mathbf{W}_t := (\mathbf{L}_{\mathbf{aa}} + N \lambda \mathbf{I})^{-1} \mathbf{L}_{\mathbf{a} \mathbf{a}_t}.
\end{align*}

\section{Alternative Approximation for\\ CMES Conditional Entropy}
\label{sec: CMES appendix}

In the main text, we approximate the second term in the CMES objective (Eq.~\ref{equ: DMES}) using the distribution \( p(z \mid z \leq f_*, \mathcal{D}^{t-1})\), which provides a computationally efficient surrogate to the intractable conditional \( p(z|f_*, a, \mathcal{D}^{t-1}) \). This approximation treats the thresholding directly in the observation space.
For brevity, we omit the conditioning on dataset $\mathcal{D}^{t-1}$ in the following expressions.

Here, we present an alternative, more principled formulation based on the distribution \( p(z \mid g(a) \leq f_*) \). This reflects the belief that the integrated signal \( g(a) \) serves as a surrogate for the unknown target function \( f(x) \), and conditioning on \( g(a) \leq f_* \) is more aligned with the structure of the underlying latent process. This alternative is not used in our experiments, but we include it here for completeness.

Due to the noise term, the density $p(z|g(a) \leq f_\ast)$ is not the truncated normal.
Using Bayes' theorem, we decompose this density as 
\begin{align*}
    p(z \mid g(a) \leq f_*) = \frac{p(g(a) \leq f_* \mid z) \cdot p(z \mid a)}{p(g(a) \leq f_*)}.
\end{align*}

The marginal terms are derived from the predictive distribution:
\begin{align*}
    p(z \mid a) &= \frac{\phi(\varphi_z(a))}{\sqrt{q_{t-1}(a,a) + \sigma^2}}, \\
    p(g(a) \leq f_*) &= \Phi(\gamma_{f_*}(a)),
\end{align*}
with $\gamma_{f_*}(a)$ defined in Eq. (\ref{equ: gamma}), and
\begin{align*}
    \varphi_z(a) := \frac{z - \nu_{t-1}(a)}{\sqrt{q_{t-1}(a,a) + \sigma^2}},
\end{align*}
where \( \phi \) and \( \Phi \) are the standard normal density and cumulative distribution functions.
To evaluate \( p(g(a) \leq f_* \mid z, a) \), we use the posterior distribution of \( g(a)| z, a \), which is Gaussian:
\[
g(a) \mid z, a \sim \mathcal{N}\left(u_{\text{noise}}(a), s^2_{\text{noise}}(a)\right),
\]
with
\begin{align*}
    u_{\text{noise}}(a) &= \frac{q_{t-1}(a,a)}{q_{t-1}(a,a) + \sigma^2} \left(z - \nu_{t-1}(a)\right) + \nu_{t-1}(a), \\
    s^2_{\text{noise}}(a) &= q_{t-1}(a,a) - \frac{q^2_{t-1}(a,a)}{q_{t-1}(a,a) + \sigma^2}.
\end{align*}
Thus,
\[
p(g(a) \leq f_* \mid z, a) = \Phi\left(\tilde{\gamma}_{f_*}(a)\right), \quad
\tilde{\gamma}_{f_*}(a) := \frac{f_* - u_{\text{noise}}(a)}{s_{\text{noise}}(a)}.
\]

The conditional entropy \( H(z \mid g(a) \leq f_*) \) can now be expressed as:
\begin{align*}
    H(z \mid g(a) \leq f_*) = - \int p(z \mid g(a) \leq f_*) \log p(z \mid g(a) \leq f_*) \, dz,
\end{align*}
which numerically evaluates to:
\begin{align*}
    - \int Z \Phi(\tilde{\gamma}_{f_*}(a)) \phi(\varphi_z(a)) \log \left[ Z \Phi(\tilde{\gamma}_{f_*}(a)) \phi(\varphi_z(a)) \right] dz,
\end{align*}
with normalization constant
\[
Z := \frac{1}{\sqrt{q_{t-1}(a,a) + \sigma^2} \cdot \Phi(\gamma^g_{f_*}(a))}.
\]

Accordingly, the CMES objective can be approximated by:
\begin{align*}
    \frac{1}{2} &\log\left[ 2\pi e \left(q_{t-1}(a,a) + \sigma^2\right) \right] ]]
    - \frac{1}{|\mathcal{F}_*|} \sum_{f_* \in \mathcal{F}_*} \sum_{z} \\
    & \Phi(\tilde{\gamma}_{f_*}(a)) \phi(\varphi_z(a)) 
    \log\left[ Z \Phi(\tilde{\gamma}_{f_*}(a)) \phi(\varphi_z(a)) \right],
\end{align*}
where the inner sum over \( z \) is computed numerically using quadrature or grid-based approximation.

\section{Experimental Settings}
\label{appendix: experimental settings}
We consider the negative Branin function and thus aim to maximise the function where the global maximum $f\left(x^*\right)= - 0.397887 \text {, at } x^*=(-\pi, 12.275),(\pi, 2.275) \text { and }(9.42478,2.475)$.
The function is evaluated on the square $x_1 \in [-5,10], x_2 \in [0,15]$.
We consider the query space $\mathcal{A} = [0,1]^2$. For linear transformation, we define $x_0 = 15 \times y_0 - 5 + \varepsilon; x_1 = 15 * y_1 + \varepsilon$; for non-transformation, we define $x_0 = 15 \times \cos(\pi y_0/2) - 5 + \varepsilon; x_1 = 15 \times \cos(\pi y_1/2) + \varepsilon$ (and truncated to be within the input space).

\section{Proof}
\label{sec: proof appendix}
We provide supplementary proofs for our theoretical results listed in Section \ref{sec: theory}.

\subsection{Proof: Regret Bound of CMES Policy}

\begin{proof}[Proof for Lemma \ref{lemma: equivalence among policies}]
The equivalent among MES, EST and GPUCB was proved in \cite{wang2017MES}. 
MES w.r.t. function $g$ is to select $a_{t} = {\arg\max}_{a \in \mathcal{A}} \mathbb{I}(z, g_\ast| a, \mathcal{D}^{t-1})$, where our policy can be obtained to replace $g_\ast$ to $f_\ast$. 
\end{proof}

\begin{remark}
    By replacing $g_\ast$ to $f_\ast$, CMES enables learning and optimising function values in mismatch spaces. 
    Note that the equivalence relies on our approximation in Eq. (\ref{equ: CMES approximation}) is a monotonic decreasing function w.r.t $\gamma_{f_\ast}(a)$. 
    Thus maximising Eq. (\ref{equ: CMES approximation}) is equivalent to minimising $\gamma_{f_\ast}(a)$.
    And EST with $m = f_\ast$ is to select data point that minimise $\gamma_{f_\ast}(a)$.
    However, it is unclear whether this equivalence still holds if we change the approximation method, for example in Appendix \ref{sec: CMES appendix}.  
\end{remark}

\begin{proof}[Proof for Lemma \ref{lemma: EST regret bound}]
From the definition of $\kappa_t$, we have
\begin{align*}
    \nu_{t-1}(a_t) + \kappa_t \sqrt{q_{t-1}(a_t, a_t)} \geq \hat{m}_t \geq f_\ast 
\end{align*}
Then with Lemma \ref{lemma: concentration g EST}, we have 
    \begin{align*}
        \tilde{r}^g_t &= \max_{x \in \mathcal{X}} f(x) - g(a_t)\\
        & \leq \nu_{t-1}(a_t) + \kappa_t \sqrt{q_{t-1}(a_t, a_t)} - g(a_t)\\
        & \leq \left(\kappa_t+\zeta_t\right) \sqrt{q_{t-1}(a_t, a_t)}
    \end{align*}
\end{proof}

\begin{proof}[Proof of Lemma \ref{lemma: information gain - var}]
    Note $\mathbb{I}(\bm{z}_T; f) = H(\bm{z}_T) - H(\bm{z}_T|f)$.
Since $z(a)=\int f(x)p(x|a)dx+\epsilon$, with $\epsilon\sim \mathcal{N}\left(0,\sigma^{2}\right)$, we have $z(a)|f\sim \mathcal{N}\left(\int f(x)p(x|a)dx,\sigma^{2}\right)$. Hence  $H(\bm{z}_T|f) = \frac{1}{2}\log |2\pi e\sigma^{2}\bm{I}|$, which does not depend on $f$.  

On the other hand, $H(z_T|\mathbf{z}_{T-1}) = \frac{1}{2}\log (2\pi e (\sigma^2  + q_{T-1}(a_T, a_T))$.
Following Eq. (\ref{equ: DMES}), $a_1, \dots, a_T$ are deterministic conditioned on $\bm{z}_{T-1}$, and since the conditional variance $q_{T-1}(a_T, a_T)$ does not depend on $\bm{z}_{T-1}$, we have
$H(\bm{z}_T) = H(\bm{z}_{T-1}) + H(z_T| \bm{z}_{T-1}) = H(\bm{z}_{T-1}) + \frac{1}{2} \log (2 \pi e (\sigma^2 + q_{T-1}(a_T, a_T)))$.
By induction, the proof concludes. 
\end{proof}

\begin{proof}[Proof for Theorem \ref{theo: Instant Regret Bound CMES}]
Following \cite{wang2017MES}, we know there exists at least one $f_*^t$ that satisfies $f_*^t>f_*$ with probability at least $1-w^{k_i}$ in $k_i$ iterations.
Then with $T^{\prime}=\sum_{i=1}^T k_i$ as the total number of iterations, we split these iterations into $T$ parts where each part has $k_i$ iterations, $i \in [1, T]$. With probability at least $1-\sum_{i=1}^T w^{k_i}$, we have at least one iteration $t_i$ which samples $f_*^{t_i}$ satisfying $f_*^{t_i}>f_*, \forall i=1, \cdots, T$ by union bound.

Let $\sum_{i=1}^T w^{k_i}=\frac{\delta}{2}$, we can set $k_i=\log _w \frac{\delta}{2 \pi_i}$ for any $\sum_{i=1}^T\left(\pi_i\right)^{-1}=1$. Following \cite{srinivas_gpucb:information-theoretic_2012}, we choose $\pi_i$ is $\pi_i=\frac{\pi^2 i^2}{6}$. 
Hence with probability at least $1-\frac{\delta}{2}$, there exist a sampled $f_*^{t_i}$ satisfying $f_*^{t_i}>f_*, \forall i=1, \cdots, T$

From Lemma \ref{lemma: EST regret bound} and \ref{lemma: information gain - var}, we have 
\begin{align*}
    \sum_{t_i=1}^T{(\tilde{r}^g_{t_i})}^2 &\leq \sum_{t_i=1}^T \left(\kappa_{t_i}+\zeta_{t_i}\right)^2 q_{t_i-1}\left(a_{t_i}\right)\\
    & \leq  \sum_{t_i=1}^T \left(\kappa_{t_i}+\zeta_{t_i}\right)^2 \sigma^2 (\sigma^{-2} q_{t_i-1}(a_{t_i}, a_{t_i}))\\
    & \leq 2 \left(\kappa_{t^*}+\zeta_{T}\right)^2 \sigma^2 C_1 \rho_T \\
    & = C \left(\kappa_{t^*}+\zeta_{T}\right)^2 \rho_T
\end{align*}
   where $C = 2 \sigma^2 C_1 = 2/\log(1 + \sigma^{-2}) \geq 2 \sigma^2$, $C_1 = \sigma^{-2} / \log \left(1+\sigma^{-2}\right) \geq 1$, since $s^2 \leq C_1 \log(1 + s^2)$ for $s^2 \in [0, \sigma^{-2}]$. Then by the Cauchy-Schwarz inequality. 
\begin{align*}
    \sum_{i=1}^T \tilde{r}^g_{t_i} &\leq \left(\kappa_{t^*}+\zeta_{T}\right) \sqrt{T \sum_{t=1}^T {(\tilde{r}^g_t)}^2} \\
    & \leq \left(\kappa_{t^*}+\zeta_{T}\right)  \sqrt{T C \rho_T}
\end{align*}
Then the instant regret (Definition \ref{defi: instant regret}),
\begin{align*}
    r^g_{T^\prime} &= f(x_{*})- \max_{t \in [1,T]} g(a_{t})\\
    & \leq \frac{1}{T}\sum_{i=1}^T \tilde{r}^g_{t_i} = \left(\kappa_{t^*}+\zeta_{T}\right)\sqrt{C \rho_T/T} 
\end{align*}
where $T^{\prime}=\sum_{i=1}^T k_i=\sum_{i=1}^T \log _w \frac{\delta}{2 \pi_i}$ is the total number of iterations.
\end{proof}

\subsection{Proof: Regret of Multi-resolution Queries}

Assuming the conditional distribution $p(x|a)$ is known, we show the concentration inequality for the predictor CMP shown in Appendix \ref{appendix: CMP}, by following the concentration inequality of Gaussian distributed variables \citep{srinivas_gpucb:information-theoretic_2012}.
This is because  the posterior of $f(x)|\mathbf{z}_t \sim \mathcal{N}(m_t(x), \sigma^2_t(x))$.
We can further extend the concentration results to asymptotic bounds for empirical estimates when we need to estimate $p(x|a)$ from data, following the convergence result in \citep{chau_deconditional_2021}.

\begin{lemma}[Concentration inequality of $f$]
\label{theo: concentration}
Pick $\delta \in(0,1)$ and set $\beta_t=2 \log \left(|\mathcal{X}| \pi_t / \delta\right)$, where $\sum_{t \geq 1} \pi_t^{-1}=1, \pi_t>0$. Then, given $ \mathcal{D}^{t-1}$,
$$
\left|f(x)-m_{t-1}(x)\right| \leq \sqrt{\beta_t k_{t-1}(x,x)} \quad \forall x \in \mathcal{X}, \forall t \geq 1
$$
holds with probability $\geq 1-\delta$.
\end{lemma}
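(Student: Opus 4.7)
The plan is to mimic the standard Gaussian tail bound plus union bound argument of Srinivas et al. (2012, Lemma 5.1), carried through using the posterior distribution of $f$ supplied by the Conditional Mean Process framework rather than the standard GP posterior.

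First I would invoke the posterior distribution of $f$ given $\mathcal{D}_1, \mathcal{D}_2^t$ established in equation (\ref{equ: posterior f}): under the CMP framework, $f \mid \mathcal{D}_1, \mathcal{D}_2^t \sim \mathcal{GP}(m_t, k_t)$, so for each fixed $x \in \mathcal{X}$,
\begin{align*}
f(x) \mid \mathcal{D}_1, \mathcal{D}_2^t \sim \mathcal{N}\bigl(m_t(x), \sigma_t^2(x)\bigr),
\end{align*}
where $\sigma_t^2(x) := k_t(x,x)$. This is the key ingredient that replaces the analogous posterior used in GP-UCB; once it is in place, the rest of the argument is distributional and blind to how the posterior was obtained.

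Next, for any fixed $x$ and $t$, I would apply the standard Gaussian tail bound: if $Z \sim \mathcal{N}(0,1)$ then $\Pr[|Z| > c] \leq e^{-c^2/2}$ for $c > 0$. Setting $c = \beta_t^{1/2}$ and rescaling yields
\begin{align*}
\Pr\bigl[\,|f(x) - m_t(x)| > \beta_t^{1/2}\sigma_t(x) \,\bigm|\, \mathcal{D}_1, \mathcal{D}_2^t\,\bigr] \leq e^{-\beta_t/2}.
\end{align*}
Plugging in $\beta_t = 2\log(|\mathcal{X}|\pi_t/\delta)$ gives $e^{-\beta_t/2} = \delta/(|\mathcal{X}|\pi_t)$.

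Finally I would apply the union bound twice: once over the (finite) set $x \in \mathcal{X}$ at each fixed $t$, contributing a factor $|\mathcal{X}|$, and then over $t \geq 1$, giving
\begin{align*}
\Pr\bigl[\exists\, x \in \mathcal{X},\, t \geq 1 : |f(x) - m_t(x)| > \beta_t^{1/2}\sigma_t(x)\bigr]
\leq \sum_{t \geq 1} |\mathcal{X}| \cdot \frac{\delta}{|\mathcal{X}|\pi_t} = \delta \sum_{t \geq 1} \pi_t^{-1} = \delta,
\end{align*}
using the hypothesis $\sum_{t \geq 1} \pi_t^{-1} = 1$. The complementary event gives the claimed bound with probability at least $1 - \delta$.

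The argument is essentially a textbook concentration proof and I do not anticipate a genuine obstacle; the only real subtlety is confirming that the CMP posterior retains exact Gaussianity and the mean/covariance structure used above, which is already supplied by the construction in equation (\ref{equ: posterior f}). One minor caveat worth flagging in the write-up is that the statement assumes $|\mathcal{X}|$ is finite (or that a discretisation argument is applied); for continuous $\mathcal{X}$ one would need an additional covering/Lipschitz argument to discretise, exactly as in Srinivas et al., but the discrete version suffices for the way this lemma is used in the paper.
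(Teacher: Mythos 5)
Your proposal is correct and follows essentially the same route as the paper's proof: use the CMP posterior Gaussianity $f(x)\mid\mathcal{D}_1,\mathcal{D}_2^t\sim\mathcal{N}(m_t(x),\sigma_t^2(x))$ from Eq.~(\ref{equ: posterior f}), apply the standard Gaussian tail bound with $c=\beta_t^{1/2}$, and union-bound over the finite $\mathcal{X}$ and over $t$ using $\sum_{t\geq 1}\pi_t^{-1}=1$. The only differences are cosmetic: the paper derives the tail inequality explicitly and leaves the union over $t$ implicit, whereas you spell that step out and (reasonably) flag the finiteness/discretisation caveat for $\mathcal{X}$.
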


\begin{proof}[Proof for Lemma \ref{theo: concentration}]
From Appendix \ref{appendix: CMP},
we know 
$f(x)|\mathbf{z}_t \sim \mathcal{N}(m_t(x), \sigma^2_t(x))$, where $\sigma_t(x) = k_t(x,x)$. If $r \sim \mathcal{N}(0,1)$, then
\begin{align*}
    \begin{aligned}
\operatorname{Pr}\{r>c\} & =e^{-c^2 / 2}(2 \pi)^{-1 / 2} \int_c^{\infty} e^{-(r-c)^2 / 2-c(r-c)} d r \\
& \leq e^{-c^2 / 2} \operatorname{Pr}\{r>0\}=(1 / 2) e^{-c^2 / 2}
\end{aligned}
\end{align*}
Let $r = (f(x) - m_t(x))/\sigma_t(x)$ and $c = \sqrt{\beta_t}$.
Applying the union bound, we have 
\begin{align}
    \left|f(x)-m_{t}(x)\right| \leq \beta_t^{1 / 2} \sigma_{t}(x) \quad \forall x \in \mathcal{X}
\end{align}
holds with probability $\geq 1-|\mathcal{X}| e^{-\beta_t / 2}$.
\end{proof}

The proof for Lemma \ref{lemma: concentration g EST} follows the same logic, while we apply a union bound over the evaluated points $\{a_t\}_{t=1}^T$ following \cite{pmlr-v51-wang16f}.

\begin{proof}[Proof for Theorem \ref{theo: instant regret bound gpucb}]
Define 
$r^\prime_{t}=f(x^{*})- g(\xi_{t},\delta_t).
$
We start by the triangle inequality
\[
r^\prime_{t}\leq |f(x^{*})-f(\xi_{t})|+|f(\xi_{t})-g(\xi_{t},\delta_t)|.
\]
The first term is the usual regret. Assume we have at iteration $t-1$ the posterior of $f$ with mean $\mu_{t-1}$ and covariance $k_{t-1}$. 
Now, we can bound $|f(x^{*})-f(\xi_{t})|\leq 2\sqrt{ \beta_{t} k_{t-1}(\xi_{t},\xi_t)}$ in high probability.

The second term is $O(\delta_{t}^2)$ by Taylor expansion. We see this from
\begin{align*}
f(x)=f(\xi_t + \eta_t) &= f(\xi_t) + \nabla f(\xi_t)^\top \eta_t \\&+ \frac{1}{2} \eta_t^\top \nabla^2 f(\xi_t) \eta_t + O(\|\eta_t\|^3).
\end{align*}
Taking expectations over $\eta_t\sim p_t$ and using the third moment assumption on $\eta_t$ gives $\mathbb E\|\eta_t\|^3\leq C\delta_t^3$ so
\[
g(a_t) = f(\xi_t) + \frac{1}{2} \delta_{t}^2 \mathrm{Tr}(\nabla^2 f(\xi_t)) + o(\delta_{t}^2).
\]
Hence,  since $f$ has a bounded second derivative, i.e. $\mathrm{Tr}(\nabla^2 f(x))\leq 2M$, for small enough $\delta_{t}$, $|f(\xi_{t})-g(a_{t})|\leq M\delta_{t}^{2}$.

Now using $(a+b)^{2}\leq2a^2+2b^{2}$, we get
\begin{align*}
\sum_{t=1}^{T} {r_{t}^\prime}^{2}&\leq {2}\sum (|f(x^{*})-f(\xi_{t})|^{2}+|f(\xi_{t})-g(a_{t})|^{2})\\
&\leq 8\sum\beta_{t}k_{t-1}(\xi_{t}, \xi_{t})+2M\sum\delta_{t}^{4}.
\end{align*}

Considering the first term, we get
\begin{align*}
\sum_{t=1}^T\beta_t k_{t-1}(\xi_{t}, \xi_{t})
 & \leq \beta_T \sigma^2\sum_{t=1}^T\sigma^{-2} k_{t-1}(\xi_{t}, \xi_{t}) \\
& \leq \beta_T \sigma^2 C \sum_{t=1}^T\log \left(1+\sigma^{-2} k_{t-1}(\xi_{t}, \xi_{t})\right)\\
& =  2\beta_T \sigma^2 C \mathbb{I}(\mathbf{f}_\Xi+\boldsymbol{\epsilon},\mathbf{f}_\Xi),
\end{align*}
by Lemma 5.3 in \citet{srinivas_gpucb:information-theoretic_2012}, and we chose $C=\sigma^{-2} / \log \left(1+\sigma^{-2}\right) \geq 1$ and denoted $\mathbf{f}_\Xi=[f(\xi_t)]_{t=1}^T$ and $\boldsymbol{\epsilon}\sim \mathcal{N}(0,\sigma^2 I_T)$. 

Now we can bound $\mathbb{I}(\mathbf{f}_\Xi+\boldsymbol{\epsilon},\mathbf{f}_\Xi)$ by $\gamma_T$, the maximal information gain for \emph{direct queries} (i.e. $\delta_t=0$) given by
$$
    \gamma_T := \sup_{S \subset \mathcal{X}: |S| = T} \mathbb{I}(\mathbf{f}_S+\boldsymbol{\epsilon}, \mathbf{f}_S)= \sup_{S \subset \mathcal{X}: |S| = T}  \frac{1}{2} \log|I+\sigma^{-2}K_S|
$$
where $K_S$ is the covariance matrix of $\mathbf{f}_S=[f(x)]_{x\in S}$ and $\boldsymbol{\epsilon}\sim \mathcal{N}(0,\sigma^2 I_T)$. Quantity $\gamma_T$ was thoroughly analysed in \citet{srinivas_gpucb:information-theoretic_2012} for various choices of kernels $k$. Putting everything together gives

$$\sum_{t=1}^{T} {r_{t}^\prime}^{2}\leq O(\beta_{T}\gamma_{T})+O\left(\sum_{t=1}^T\delta_{t}^{4}\right).$$

Now, from Cauchy-Schwarz,
\[
\sum_{t=1}^{T} r^\prime_{t}\cdot 1\leq \sqrt{ T\sum_{t=1}^{T} {r_{t}^\prime}^{2} }=\mathcal{O}\left(\sqrt{ T\beta_{T}\gamma_{T}+T\sum_{t=1}^{T}\delta_{t}^{4} }\right)
\]

And finally the instant regret can be bounded as
\[
r^g_T \leq \frac{1}{T} \sum_{t=1}^{T} r^\prime_{t} = \mathcal{O} \left(\sqrt{\frac{\beta_T \gamma_T + \sum_{t=1}^{T}\delta_{t}^{4}}{T}}\right).
\]

From \citet{srinivas_gpucb:information-theoretic_2012}, the case of Gaussian kernel gives $\gamma_T=\mathcal{O}((\log T)^{d+1})$ and directly setting schedule for $\delta_t$ to match the rate of the first term gives the result.

\end{proof}

\begin{figure}[t!]
    \centering
    \includegraphics[width=0.8\linewidth]{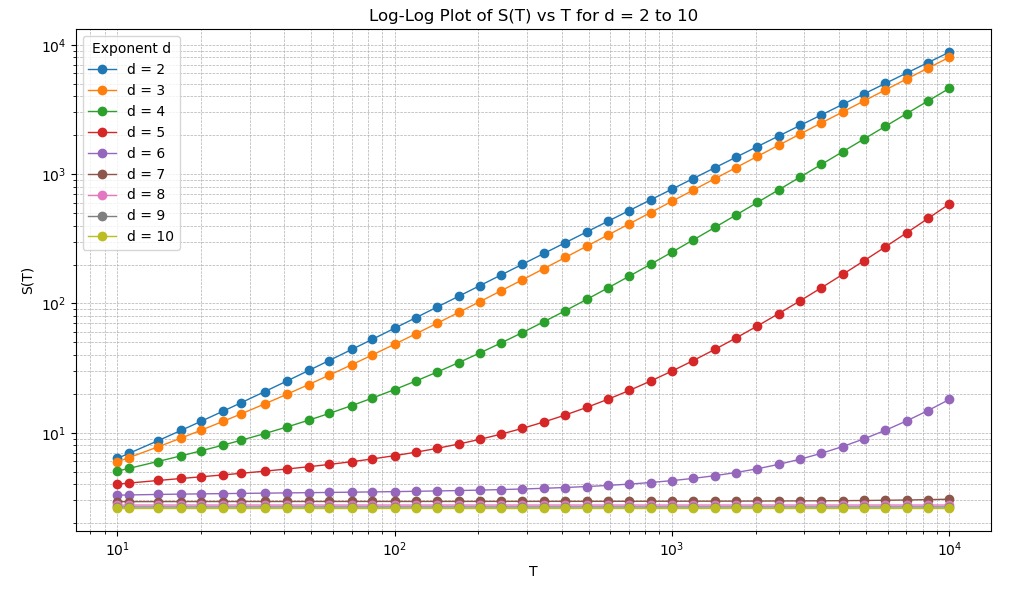}
    \caption{Savings by allowing multiresolution queries for varying dimensions.}
    \label{fig:savings_multiresolution}
\end{figure}

We further illustrate of cost savings in multiresolution queries in Figure \ref{fig:savings_multiresolution}.
Consider the case of a Gaussian kernel prior on $f$ and the setting where $g(a_t) = \int f(\xi_t + \eta_t) p_t(\eta_t) d\eta_t$ and $\eta_t$ is uniform on $[-\tau_{t},\tau_{t}]^{d}$, where $\tau_{t}=t^{-1/4}(\log t)^{d/4}$ follows a deterministic schedule. By Theorem \ref{theo: instant regret bound gpucb}, this schedule matches the regret upper bounds of the direct queries.

Assume that the cost associated to the "resolution" $\tau$ is given by

$$ \lambda(\tau) = \frac{1}{1+\tau^d}. $$

Note that this means that a direct query which has $\tau=0$ corresponds to the cost $1$ and $\tau>0$ implies a cheaper query.
The total cost of direct queries is thus $T$ and of our particular schedule $S(T)=\sum_{t=1}^T \frac{1}{1+(t^{-1/4}(\log t)^{d/4})^d}$. As the dimension $d$ increases, we obtain higher savings due to allowing direct queries.




\end{document}